\documentclass[11pt,a4paper]{article}

\usepackage{amsmath,amsfonts,amssymb,amsthm}
\usepackage{array}
\usepackage{makecell}
\usepackage{graphicx}
\usepackage{booktabs}
\usepackage{multirow}
\usepackage{subcaption}
\usepackage{hyperref}
\usepackage{float}
\usepackage{algorithm}
\usepackage{algpseudocode}
\usepackage{authblk}

\usepackage[margin=1in]{geometry}
\usepackage{setspace}
\usepackage{xcolor}
\usepackage{url}
\usepackage{cite}

\theoremstyle{plain}
\newtheorem{theorem}{Theorem}[section]

\theoremstyle{definition}
\newtheorem{definition}[theorem]{Definition}

\theoremstyle{remark}
\newtheorem{remark}[theorem]{Remark}

\begin{document}

\title{CGDT-CESN: Correlation-Guided and Dynamics-Trained Clustered Echo State Network for Multivariate Time-Series Prediction}

\author[1]{Hariharan Srinivasan\thanks{Corresponding author: \texttt{harivasan47@gmail.com}}}
\author[1]{Masoom Bhargava}
\author[1,2]{Sayan Gupta}

\affil[1]{The Uncertainty Lab, Department of Applied Mechanics \& Biomedical Engineering, Indian
Institute of Technology Madras, India -  600036 .}
\affil[2]{Center for Complex Systems \& Dynamics IIT Madras, India - 600036}

\maketitle

\begin{abstract}
Accurate prediction of multivariate time series requires models that simultaneously capture variable-specific dynamics and cross-variable interactions. In this work, a Correlation-Guided and Dynamics-Trained Clustered Echo State Network (CGDT-CESN) is proposed, which comprises a modular reservoir architecture with each cluster distinctly specializing in one of the variables in the multivariate time series. An innovative reservoir design is proposed in which inter-cluster connections are enabled by empirically observed correlations among variables, providing a data-driven coupling mechanism. Apart from conventional readout training, inter-cluster connection weights informed by correlations are trained, which in turn allows the reservoir to capture time-varying cross-variable dependencies without sacrificing computational efficiency and further improves forecasting accuracy. The prediction performance of CGDT-CESN is found to be superior to that of the conventional ESN when evaluated on two fundamentally different datasets: Twitter information cascade evolution and solar wind forecasting. Also, the improvement in performance is preserved across multiple prediction horizons, even when presented with a low volume of data. By unifying cluster-specific specialization with learned inter-variable coupling, CGDT-CESN offers a principled, data-driven route to multivariate time series prediction.

\end{abstract}

\textbf{Keywords:} Multivariate time-series prediction, echo state network, clustered reservoir, echo state property, Spearman rank correlation, coordinate-descent training,

\section{Introduction}

Multivariate time series record the joint evolution of several variables and arise wherever a physical, biological, or socio-technical process is observed through multiple channels \cite{wei2019multivariate}. Examples span financial markets, industrial process monitoring, climate systems, social networks, and space-weather observations, each of which exhibits both temporal autocorrelation and cross-variable coupling \cite{tsay2013multivariate}. While univariate forecasting extrapolates a single variable from its own history, multivariate prediction must simultaneously model the dynamics of each variable and the dependence that exist between them \cite{kim2025comprehensive}.

Deep recurrent neural network (RNN) architectures such as long short-term memory (LSTM) networks and gated recurrent units (GRUs) \cite{kim2025comprehensive, kong2025deep} are widely used for this purpose. Although these architectures have proven to capture temporal patterns, their gradient-based training impose significant computational expense and requires extensive hyperparameter tuning. Reservoir computing (RC) offers an efficient alternative by replacing backpropagation through time with a fixed random recurrent layer and a single linear readout. RC avoids the vanishing- and exploding-gradient problems of RNN training while substantially reducing computational cost. The Echo State Network (ESN) is the instance of RC with a fixed random reservoir of nonlinear nodes whose fading-memory dynamics project the input into a high-dimensional feature space from which the readout is learned by ridge regression \cite{jaeger2002adaptive}. A range of reservoir architectures have since been proposed to exploit this nonlinearity more effectively.

Architectures inspired by modular organization and functional specialization from realistic biological neural networks, offers computational advantages in processing efficiency and structural robustness \cite{meunier2010modular, hilgetag2000anatomical, hilgetag2004clustered}. Building on this principle, several clustered-reservoir ESNs have been proposed: scale-free highly clustered ESNs \cite{deng2007collective}; Ward, K-medoids, and centroid-based partitions of scale-free and small-world reservoirs \cite{najibi2015scesn}, apriori data-driven multi-clustered ESN \cite{li2015priori}; and Erdős–Rényi or Barabási–Albert reservoirs applied to signal denoising and image classification \cite{junior2020clustered}. These clustered designs have, however, been developed almost exclusively for univariate prediction tasks [13], [14], with relatively little attention paid to the multivariate setting \cite{xue2016application, yu2011clustered}.

 Several ESN variants have been proposed for the multivariate setting, including the adaptive elastic ESN \cite{xu2016adaptive}, broad ESN \cite{yao2019broad}, robust variational ESN \cite{xu2018hybrid}, Laplacian ESNs \cite{han2017laplacian} and hierarchical ESNs. However, these methods significantly increase hyperparameter scaling and computational overhead. More recently, the clustered ESN (CESN) \cite{hariharan2025multivariate} has been introduced specifically for multivariate prediction: the reservoir is partitioned into a number of clusters equal to the number of target variables, with each variable mapped to a dedicated cluster. Sparse inter-cluster connections allow the clusters to exchange information and to learn cross-variable correlations. The readout is trained per cluster for segregating each cluster's specialization and to reduce interference in the output weights. Experiments on ring, Erdős–Rényi, and scale-free topologies in \cite{hariharan2025multivariate} showed that the clustered configuration predicts more accurately than a conventional ESN of equivalent size. This method has been applied to predict complex multivariate time series such as the National Stock Exchange (NSE) India, stock market, solar wind, and chaotic Rössler time series, and has been shown to enhance the prediction accuracy. A limitation of existing ESN and CESNs is that the inter-cluster connection density is a free hyperparameter that is tuned by exhaustive search rather than inferred from the data.

 The main objective of this research is to develop a data-driven hyperparameter fixation approach that is generalizable and flexible for any multivariate time series. Inferring the inter-cluster connectivity from the data becomes crucial in this approach as it indirectly establishes the strength of the dependencies between the variables in a multivariate time series. These dependencies are mathematically modeled through the correlation between the variables in the multivariate time series and are computed using the Spearman rank correlation method \cite{Spearman1904, Schober2018}. These model the inter-cluster connections and in turn mitigates the brute force search of the hyperparameter in the existing literature. The weights of these inter-cluster edges in this Correlation-Guided CESN (CG-CESN) are trained, so that the reservoir learns cross-variable couplings directly from the data while preserving the linear-readout structure that defines the ESN paradigm. To train the inter-cluster weights without recovering the gradient-based optimization cost of a RNN, an alternating coordinate-descent scheme is adopted that updates the readout and in turn the inter-cluster weights. This updation alters the dynamics of the reservoir and  the resulting model, therefore named as Correlation-Guided and Dynamics-Trained Clustered Echo State Network (CGDT-CESN), is presented in the paper. CGDT-CESN preserves the echo state property verified through \cite{jaeger2001echo} state convergence and spectral radius analysis, using correlated and uncorrelated noise probes, both before and after training. 
 
 The performance of CGDT-CESN is evaluated on two datasets presenting fundamentally different types of temporal dynamics: discrete, event-driven temporal process and continuous-valued physical time series. This complementary selection paves way to assess the generality of CGDT-CESN across both the types of multivariate time series whose underlying process is different and commonly encountered in real-world applications.  
 
 For the discrete temporal process, information cascades on Twitter \cite{adamic2016information, zhou2021survey, ma2017detect} is chosen. In the context of social network, a cascade represents the information propagation through sequence of user interactions. Cascade descriptors namely size, depth, breadth, and structural virality are estimated to capture different aspects of cascade evolution constituting to a multivariate time series. These descriptors are predicted at three growth horizons corresponding to 50\%, 75\%, and 100\% of the final cascade growth. Performance is reported as a function of training-set size, to characterize data efficiency. Solar-wind measurements are considered as representative for continuous-valued temporal process, which constitutes different physical variables of solar wind \cite{SWPC_RTSW}. Finally, the computational cost of CGDT-CESN is benchmarked against the standard ESN to quantify the overhead introduced in training.

The remainder of this paper is organized as follows: Section \ref{sec:esn} gives a brief introduction to the ESN, Section \ref{sec:cgdt-cesn} includes the necessary background with Section \ref{sec:wiring} explaining the methodology of data-driven construction of inter-cluster connections, while Section \ref{sec:cgdt-esn} explains the steps involved in the training process of CGDT-CESN algorithm. Section \ref{sec:esp} explores the theoretical properties of the proposed algorithm with particular emphasis on echo state property. The application of the algorithm on the representative examples involving information cascade in social media and solar wind datasets are presented in Section \ref{sec:apps} with discussion emphasizing about the performance. The salient outcome of this study is presented in the concluding Section \ref{sec:conc}.  

\section{Echo State Network}
\label{sec:esn}
Echo State Network (ESN) consists of three layers: input, a reservoir with $N$ dynamically nonlinear nodes, and an output layer. At each discrete time step, $n = 0,1,...,T$, the input vector $\mathbf{u}(n)\in\mathbb{R}^{N_u}$, with $N_u$ denoting the number of input variables is projected onto the reservoir through the input weight matrix $\mathbf{W}_{in}\in\mathbb{R}^{N \times N_u}$. The internal dynamics are governed by the sparse recurrent matrix $\mathbf{W}\in\mathbb{R}^{N\times N}$ and the weights are randomly initialized and scaled. The spectral radius $\rho$ measures the largest magnitude of the eigenvalues of the reservoir weight matrix which helps to determine the reservoir stability and memory dynamics. The condition, $\rho(\mathbf{W})<1$ ensures the echo state property such that the influence of past inputs decays and also prevents unstable reservoir dynamics.

The reservoir state $\mathbf{x}(n)\in\mathbb{R}^{N}$ evolves according to
\begin{equation}
\mathbf{x}(n)
=
(1-\alpha)\mathbf{x}(n-1)
+
\alpha\tanh
\left(
\mathbf{W}_{in}\mathbf{u}(n)
+
\mathbf{W}\mathbf{x}(n-1)
\right),
\label{eq:esn_state}
\end{equation}
where $\alpha\in[0,1]$ is the leak rate controlling the reservoir memory. The network output is computed as
\begin{equation}
\mathbf{y}(n)=\mathbf{W}_{out}\mathbf{x}(n),
\label{eq:esn_output}
\end{equation}
where $\mathbf{W}_{out}$ is the trainable output weight matrix. Training is performed only on $\mathbf{W}_{out}$ using ridge regression,
\begin{equation}
\mathbf{W}_{out}
=
\arg\min_{\mathbf{W}}
\left(
\|\mathbf{Y}-\mathbf{W}\mathbf{X}\|^2
+
\lambda\|\mathbf{W}\|^2
\right),
\label{eq:ridge}
\end{equation}
where the collected reservoir states are contained in  $\mathbf{X}$, the target output matrix is $\mathbf{Y}$ , and $\lambda$ is the regularization parameter. As the readout weights are optimized while the reservoir remains fixed, computational burden is sufficiently reduced as opposed to conventional recurrent neural networks. 

\section{Correlation-Guided and Dynamics Trained CESN}
\label{sec:cgdt-cesn}

\subsection{Clustered ESN}
Clustered Echo State Network (CESN) extend the conventional ESN by partioning the reservoir with $K$ clusters, each containing $N_c$ neurons. The number of clusters is fixed depending on number of variables to be predicted. Let the total number of neurons be $N = K N_c$.
Let:
\begin{itemize}
    \item $\mathbf{u}(n) \in \mathbb{R}^{N_u}$ be the input
    \item $\mathbf{x}(n) \in \mathbb{R}^{N}$ be the reservoir state
    \item $\mathbf{y}(n) \in \mathbb{R}^{K}$ be the output
\end{itemize}
The reservoir is divided into $K$ clusters, such that
\[
\mathbf{x}(n) =
\begin{bmatrix}
\mathbf{x}_1(n) \\
\mathbf{x}_2(n) \\
\vdots \\
\mathbf{x}_K(n)
\end{bmatrix}, \quad \mathbf{x}_c(n) \in \mathbb{R}^{N_c}; c= 1,2,...,K
\]
In CESN, the reservoir topology is controlled through two hyperparameters: intra-cluster connection probability $p_{in}$ among the node population within the same cluster and $p_{out}$ which determines the probability of connections between the $K$ clusters. In order to obtain a clustered topology, $p_{in} \gg p_{out}$ which results in densely connected nodes within clusters and sparse connections between the clusters. The state of the each cluster evolves according to the standard ESN dynamics (Eq. \ref{eq:esn_state}) and the readout layer is designed specific to the clusters. The cluster $c$ states and targets are stacked over time as
\begin{align}
\mathbf{X}_c &=
\begin{bmatrix}
\mathbf{x}_c(1)^T \\
\mathbf{x}_c(2)^T \\
\vdots \\
\mathbf{x}_c(T)^T
\end{bmatrix}, \quad
\mathbf{Y}_c =
\begin{bmatrix}
\mathbf{y}_c(1) \\
\mathbf{y}_c(2) \\
\vdots \\
\mathbf{y}_c(T)
\end{bmatrix}
\end{align}

For each cluster $c$, solving by ridge regression as:
\begin{align}
\mathbf{W^{out}_c} =
\mathbf{Y}_c^T \mathbf{X}_c (\mathbf{X}_c^T \mathbf{X}_c + \lambda_{out} I)^{-1}
\end{align}

leads to minimizing:
\begin{align}
\min_{\mathbf{W^{out}_c}}
\|\mathbf{Y}_c - \mathbf{X}_c \left(\mathbf{W^{out}_c}\right)^{T}\|^2 + \lambda_{out} \|\mathbf{W^{out}_c}\|^2
\end{align}
Hence, by assigning dedicated neuron groups to each input variable from a multivariate time series, CESN has demonstrated improved predictive performance \cite{hariharan2025multivariate}.  

Since the variables of the multivariate process coevolve with respect to time, they inherently exhibit mutual dependencies. In the existing CESN, the probability of inter-cluster coupling $p_{out}$ between the clusters are treated as free hyperparameter which strongly depends on user-defined choices. Furthermore, it is important to note that typically the variables present heterogeneous degree of dependence in a multivariate process, which the CESN is not equipped to capture as $p_{out}$ is treated as global hyperparameter. As a result, the resulting reservoir topology of CESN may not efficiently represent the underlying dynamics of multivariate data. 

\begin{figure*}
    \centering
        \includegraphics[width=1\linewidth]{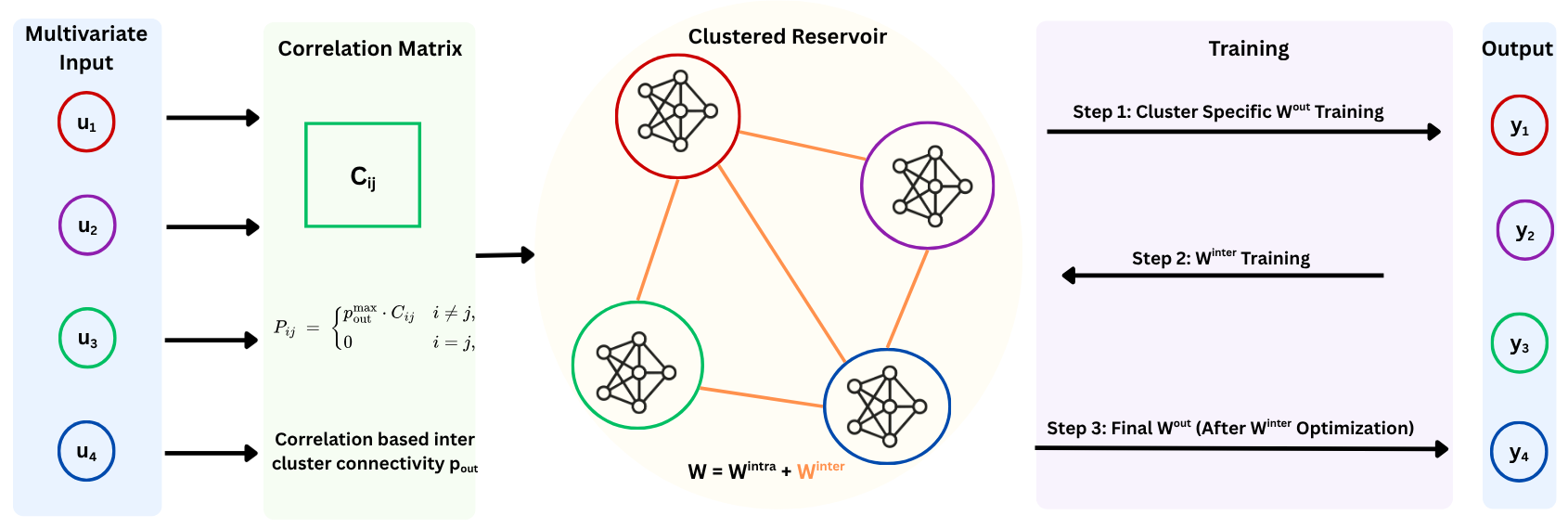}
    \caption{Architecture of Correlation-Guided and Dynamics Trained Clustered Echo state Network Illustrated for $K=4$ clusters corresponding to the four features.}
    \label{fig:cesn_architecture}
\end{figure*}
This motivates to device a strategy in which the $p_{out}$ could be inferred from data. Hence, the reservoir weight matrix will be correlation guided. Even after designing the reservoir, whether the inter-cluster connections appropriately weights the connections is still a question that needs to be examined. Standard ESN training estimates only the readout $\mathbf{W^{out}}$ by ridge regression, leaving the reservoir matrix $\mathbf{W}$ and the input weights $\mathbf{W^{in}}$ fixed. Attempts to train $\mathbf{W}$ or $\mathbf{W^{in}}$ directly reduce the model to a conventional RNN trained by backpropagation, incurring the cost and gradient pathologies that the ESN framework was designed to avoid. This study instead trains only the inter-cluster weights $\mathbf{W^{inter}}$, using a coordinate-descent scheme. This keeps the computational cost comparable to that of a standard ESN while letting the reservoir adapt its inter-cluster couplings to the data. The model is therefore named Correlation Guided and Dynamics Trained (CGDT-CESN), with $p_{out}$ derived from the correlation layer, and dynamics-trained refer to the iterative optimization of the inter-cluster weights $\mathbf{W^{inter}}$, which govern the couplings between clusters and shape the reservoir dynamics.

The CGDT-CESN architecture is illustrated in Fig. \ref{fig:cesn_architecture}.  Pairwise correlations among the input variables are estimated as described in Section \ref{sec:wiring} and used to determine the Bernoulli probabilities of the inter-cluster edges. Each input variable is then routed to its corresponding cluster, and the inter-cluster connections are instantiated according to the resulting probability matrix $P$ which will be derived as a result of correlation computation. Training then proceeds in three phases: (i) ridge regression of the cluster-specific readouts $\mathbf{W^{out}}$; (ii) iterative refinement of the inter-cluster weights $\mathbf{W^{inter}}$ via the coordinate-descent procedure detailed in Section \ref{training}; and (iii) a final readout update with the refined $\mathbf{W^{inter}}$. 

The following sections will discuss in detail on the methods and mathematical steps involved in defining this architecture.

\subsection{Data-Driven Inter-Cluster Connectivity via Rank Correlation}
\label{sec:wiring}
\label{spearman_section}

To estimate the dependence between the variables, correlation method is employed, from which the $p_{out}$ for each pair of clusters could be inferred. Before choosing any methods, it is important to understand the nature of time series in terms of stationarity such that the calculated degree of correlations are meaningful. When mean $\mu$ of a time series varies over time (non-stationary), sample correlations are dominated by shared trends rather than by genuine co-movement, making them unsuitable for inferring network structure.

Let $\mathbf{x}_n = (\mathbf{x}_{(n,1)}, \ldots, \mathbf{x}_{(n,K)})^\top \in \mathbb{R}^K$ denote a vector of
observations across $K$ variables at discrete time $n$, and let
\begin{equation}
    X = \bigl(\mathbf{x}_1,\, \mathbf{x}_2,\, \ldots,\, \mathbf{x}_T\bigr)^\top \in \mathbb{R}^{T \times K}
\end{equation}
be a multivariate time series of length $T$. Define $K$ as the number of variables (and hence clusters). A univariate series $\{\mathbf{x}_{n,k}\}$ is said
to be \emph{weakly stationary} if its first and second moments are finite and invariant
to time translation, \textit{i.e.,},
\begin{equation}
    \begin{aligned}
        \mathbb{E}[\mathbf{x}_{n,k}] &= \mu_k < \infty \\ 
     \mathrm{Cov}(\mathbf{x}_{n,k},\, \mathbf{x}_{n+h,k}) &= \gamma_k(h) \quad \forall\, n,
\end{aligned}
\end{equation}
where $\gamma_k(h)$ depends only on the lag $h$ and not on $n$. The stationarity of each time series $\{\mathbf{x}_{n,k}\}_{t=1}^T$ can be tested for using two complementary procedures: Augmented Dickey-Fuller (ADF) test and the Kwiatkowski–Phillips–Schmidt-Shin (KPSS) test. A series is treated as stationary only when both tests agree, i.e., ADF rejects its null and KPSS fails to reject its null \cite{Box2015}.

When one or more time series is found to be non-stationary, the first-difference
transformation is applied uniformly across all $K$ variables to ensure a common
analytical basis. The differenced series is defined as,
\begin{equation}
    \Delta x_{n,k} = x_{n+1,k} - x_{n,k},
    \quad n = 1,\ldots,T-1,\quad k=1,\ldots,K.
\end{equation}

For a series integrated of order one in which first order differences are taken in between the consecutive observations, denoted $\mathrm{I}(1)$, first differencing
yields a stationary $\mathrm{I}(0)$ series. This follows because if
$\mathbf{x}_{n,k} = \mathbf{x}_{n-1,k} + \epsilon_{n,k}$ with $\epsilon_{n,k}$ stationary, then
$\Delta \mathbf{x}_{n,k} = \epsilon_{n,k}$ is stationary by construction. All subsequent
analysis is conducted on $\Delta X$, with effective series length $m = T - 1$.

Standard correlation measures such as Pearson's coefficient capture only linear dependence and are sensitive to outliers and distributional assumptions, often failing to detect monotonic but nonlinear relationships. On the other hand, Spearman rank correlation \cite{Spearman1904, Schober2018} captures monotone and potentially nonlinear dependencies in a distribution-free manner.

For two differenced marginal series $\boldsymbol{\delta}_i = \Delta X_{:,i} \in
\mathbb{R}^m$ and $\boldsymbol{\delta}_j = \Delta X_{:,j} \in \mathbb{R}^m$, let
$\mathbf{r}_i = R(\boldsymbol{\delta}_i)$ and $\mathbf{r}_j = R(\boldsymbol{\delta}_j)$
denote their rank vectors, where $R(\cdot)$ maps each element to its rank among the $m$
observations, with ties (same values in the differenced series) resolved by averaging. The Spearman rank correlation between variables $i$ and $j$ is defined as the Pearson correlation of the rank vectors:
\begin{equation}
\rho_{ij} =
\frac{
\sum_{n=1}^{m}
\left(\mathbf{r}_{i,t}-\bar{\mathbf{r}}_i\right)
\left(\mathbf{r}_{j,t}-\bar{\mathbf{r}}_j\right)
}{
\sqrt{
\left(
\sum_{n=1}^{m}
\left(\mathbf{r}_{i,t}-\bar{\mathbf{r}}_i\right)^2
\right)
\left(
\sum_{n=1}^{m}
\left(\mathbf{r}_{j,t}-\bar{\mathbf{r}}_j\right)^2
\right)
}
},
\end{equation}
where $\bar{r}_k = \frac{1}{m}\sum_{n=1}^{m} \mathbf{r}_{k,t}$ is the mean rank of variable
$k$. Since the ranks $1, 2, \ldots, m$ are always a permutation of the same set
regardless of the underlying distribution, $\rho_{ij}$ is a distribution-free
statistic satisfying $\rho_{ij} \in [-1,\,1]$, with $\rho_{ij} = \pm 1$ if and only
if $\boldsymbol{\delta}_j$ is a strictly monotone function of $\boldsymbol{\delta}_i$.

The pairwise Spearman correlations are assembled into an aggregate correlation matrix
$C \in \mathbb{R}^{K \times K}$, defined entry-wise as
\begin{equation}
    C_{ij} \;=\;
    \begin{cases}
        \lvert\rho_{ij}\rvert & i \neq j, \\[4pt]
        1                    & i = j.
    \end{cases}
\end{equation}
The absolute value is used because structural coupling between clusters should reflect the strength of dependence, regardless of sign; strong positive and strong negative co-movement equally warrant strong inter-cluster connections. By construction, $C$ is
symmetric, has unit diagonal, and satisfies $C_{ij} \in [0,1]$ for all $i,j$, making
it directly interpretable as a matrix of pairwise dependence strengths.

The matrix $C$ is used to set the inter-cluster Bernoulli connection probabilities in
the reservoir. Let $p_{\mathrm{in}} \in (0,1)$ denote the intra-cluster connection
probability, which governs the density of recurrent connections within each cluster.
To preserve the modular character of the reservoir, the inter-cluster connectivity must remain structurally subordinate to the intra-cluster connectivity. Let $N$ denote total reservoir size, so that each of the $K$ clusters contains $N_c = \frac{N}{K}$ neurons.

For a node $i$ inside some cluster $c$ with $(N_c-1)$ possible neighbors and a connection probability $p_{in}$, expected intra-cluster degree can be expressed as
\begin{equation}
    \mathbb{E}[k_{in}] = (N_c-1)p_{in}.
\end{equation}
The expected inter-cluster degree with $(K-1)N_c$ nodes outside a cluster $c$ and the inter-cluster connection probability $p_{out}$ is given by,
\begin{equation}
    \mathbb{E}[k_{out}] = (K-1)N_c p_{out}
\end{equation}
The maximum permissible inter-cluster connection probability
is bounded by $\mathbb{E}[k_{out}] \leq \mathbb{E}[k_{in}]$, leading to
\begin{equation}
    (K-1)N_c p_{out} \leq (N_c-1)p_{in}
\end{equation}
For a large reservoir $(N_c-1) \approx N_c$, leading to the bound, 
\begin{equation}
    p_{\mathrm{out}}^{\max} \leq \frac{p_{\mathrm{in}}}{K - 1}.
\end{equation}
This bound ensures that the total expected number of inter-cluster connections received
by any one cluster, summed across all $K-1$ partner clusters, does not exceed its
intra-cluster connection count. The data-driven inter-cluster probability matrix is
then defined as
\begin{equation}
\label{eq: ic_wiring}
    P_{ij} \;=\;
    \begin{cases}
        p_{\mathrm{out}}^{\max} \cdot C_{ij} & i \neq j, \\[4pt]
        0                                    & i = j,
    \end{cases}
\end{equation}
so that $P_{ij} \in \bigl[0,\, p_{\mathrm{out}}^{\max}\bigr]$. Cluster pairs with
strong empirical dependence ($C_{ij} \approx 1$) are assigned near-maximal
inter-cluster probability, while approximately independent pairs ($C_{ij} \approx 0$)
remain structurally decoupled. The diagonal is zero since self-connections are handled
separately by the intra-cluster topology.

\subsection{Training the inter-cluster edge weights}
\label{sec:cgdt-esn}

After designing the reservoir using correlation guidance, one ends up with a modified model called Correlation-Guided CESN (CG-CESN). However, whether the inter-cluster communication is appropriately weighted in accordance with the data, such that it can estimate a reliable prediction which is close to the ground truth, is a question. Hence, an innovative training method is proposed that does not completely violate the conventional ESN computing method. In this framework, a coordinate descent training scheme is adopted in which it alternates between training the readout weights $\mathbf{W^{out}}$ and $\mathbf{W^{inter}}$. Conventionally, ridge regression is solved per cluster for $\mathbf{W^{out}}$. Adding to these steps, with the identified $\mathbf{W^{out}}$, the ideal reservoir state $x^*_c(n)$ will be reconstructed with respect to the optimized $\mathbf{W^{out}}$, and also, which stays close to the original reservoir state. This reconstructed state serves as a target for estimating $\mathbf{W^{inter}}$, thereby enabling an adaptation in the inter-cluster weights to the prediction objective. This iterative process of optimizing the weights in an alternative fashion is known as the coordinate descent training method.

The detailed mathematical steps involved in CGDT-CESN training framework is described as follows:

Consider a clustered Echo State Network (ESN) with $K$ clusters, each containing $N_c$ neurons. 
\subsubsection{Forward Dynamics}
The reservoir evolves as:
\begin{align}
\label{eq:x(t)}
    x(n) &= (1-\alpha)x(n-1) + \alpha \tanh(a(n)),
\end{align}
\begin{align}
\label{eq:a(t)}
a(n) &= \mathbf{W} x(n-1) + \mathbf{W_{in}} u(n), 
\end{align}
where the reservoir weight matrix is decomposed as:
\begin{equation}
\mathbf{W} =\mathbf{W^{intra}} +\mathbf{W^{inter}}.
\end{equation}
Here,
\begin{itemize}
\item $\mathbf{W^{intra}}$ contains intra-cluster connection weights,
\item $\mathbf{W^{inter}}$ contains inter-cluster connections weights.
\end{itemize}
For cluster $c$, Eq. (\ref{eq:a(t)}) becomes :
\begin{multline}    
\label{a_c}
\mathbf{a}_c(n) = \mathbf{W_{c}^{intra}} \mathbf{x}_c(n-1)
+ \sum_{i \neq c} \mathbf{W_{(c,i)}^{inter}}\mathbf{x}_i(n-1)+ \mathbf{W^{in}_c} \mathbf{u}_c(n)
\end{multline}

where $\mathbf{W^{intra}_c} \in \mathbb{R}^{N_c \times N_c}$ is the intra-cluster weight block for cluster $c$, and $\mathbf{W^{inter}_{(c,i)}}\in \mathbb{R}^{N_c \times N_c}$ is the inter-cluster weight block from cluster $i$ to cluster $c$, representing the contributions of individual components. At the cluster level, the state update for cluster $c$ reads as  
\begin{align}
\mathbf{x}_c(n)
&= (1-\alpha)\,\mathbf{x}_c(n-1) + \alpha\,\tanh(a_c(n))
\label{eq:cluster_update}
\end{align}

\subsubsection{State Collection}

The cluster $c$ states and targets (ground truth values) are stacked over time as
\begin{align}
\mathbf{X}_c &=
\begin{bmatrix}
\mathbf{x}_c(1)^T \\
\mathbf{x}_c(2)^T \\
\vdots \\
\mathbf{x}_c(T)^T
\end{bmatrix}, \quad
\mathbf{Y}_c =
\begin{bmatrix}
\mathbf{y}_c(1) \\
\mathbf{y}_c(2) \\
\vdots \\
\mathbf{y}_c(T)
\end{bmatrix}
\end{align}

\subsubsection{Training}
\label{training}
In order to improve the validation accuracy, the following steps are followed iteratively.

\textit{{Step 1: Training Output Weights}} -  For each cluster $c$, solve ridge regression:
\begin{align}
\mathbf{W^{out}_c} =
\mathbf{Y}_c^T \mathbf{X}_c (\mathbf{X}_c^T \mathbf{X}_c + \lambda_{out} I)^{-1}
\end{align}
This minimizes:
\begin{align}
\min_{\mathbf{W^{out}_c}}
\|\mathbf{Y}_c - \mathbf{X}_c \left(\mathbf{W^{out}_c}\right)^{T}\|^2 + \lambda_{out} \|\mathbf{W^{out}_c}\|^2
\end{align}
This cluster-wise training strategy helps to model each variable with good accuracy.

\textit{{Step 2: Ideal State Reconstruction}} -  After the training of output weights, the next step is to determine a reservoir state that can reproduce the desired output.  In order to obtain a unique and temporally consistent estimate of a reservoir state to match the target output, a regularized inverse problem is solved which uses the reservoir state at the previous time step ($\mathbf{x}_c(n-1)$) as a reference. This reconstructed state  $\mathbf{x}_c^*(n)$ is chosen to reproduce the target output while remaining close to the previous reservoir state. 

For each time $n$, define:
\begin{align}
\mathbf{x}_c^*(n) =
\arg\min_\mathbf{x}
\|\mathbf{W^{out}_c} \mathbf{x}- \mathbf{y}_c(n)\|^2
+
\lambda_{inv} \|\mathbf{x}- \mathbf{x}_c(n-1)\|^2
\end{align}

Solving:
\begin{align}
    \left(\left(\mathbf{W^{out}_c}\right)^{T} \mathbf{W^{out}_c} + \lambda_{inv} I\right)\mathbf{x}_c^*(n)
= \left(\mathbf{W^{out}_c}\right)^{T} \mathbf{y}_c(n) 
+ \lambda_{inv} \mathbf{x}_c(n-1)
\end{align}
which leads to,
\begin{align}
\mathbf{x}_c^*(n) =
\left(\left(\mathbf{W^{out}_c}\right)^{T} \mathbf{W^{out}_c} + \lambda_{inv} I\right)^{-1}
\left(
\left(\mathbf{W^{out}_c}\right)^{T} \mathbf{y}_c(n)
+ \lambda_{inv} \mathbf{x}_c(n-1)\right)
\end{align}

\textit{Step 3: Inversion of Reservoir Dynamics} -- The reconstructed state provides the information about desired reservoir activity, but the inter-cluster weights that are intialized appear in the reservoir pre-activation term (Eq. \ref{a_c}). Hence, the reservoir dynamics have to be inverted to obtain the desired pre-activation vector.

From:
\begin{align}
\mathbf{x}_c(n) = (1-\alpha)\mathbf{x}_c(n-1) + \alpha \tanh(\mathbf{a}_c(n))
\end{align}

one gets,
\begin{align}
\tanh(\mathbf{a}_c^*(n)) =
\frac{\mathbf{x}_c^*(n) - (1-\alpha)\mathbf{x}_c(n-1)}{\alpha}
\end{align}

\begin{align}
\label{eq: ac_star}
\mathbf{a}_c^*(n) =
\tanh^{-1}
\left(
\frac{\mathbf{x}_c^*(n) - (1-\alpha)\mathbf{x}_c(n-1)}{\alpha}
\right)
\end{align}

In using Eq. \ref{eq: ac_star}, the argument of $\tanh^{-1}$ must lie in $(-1+\epsilon, 1-\epsilon)$, where $\epsilon$ is small to ensure numerical stability since the function is defined only between (-1,1). In the numerical calculations, $\epsilon$ is taken to be 0.01.

\textit{Step 4: Residual Construction} -- The reconstructed pre-activation vector from Eq. (\ref{eq: ac_star}) consists of the contributions from the intra, inter-cluster dynamics and external inputs. As $\mathbf{W^{intra}_c}$ and $\mathbf{W^{in}_{c}} $ are already known, they can be removed to obtain the remaining component that attributes to interactions from the other clusters. Subtracting known contributions gives,
\begin{align}
\mathbf{r}_c(n) =
\mathbf{a}_c^*(n)
-
\mathbf{W^{intra}_c} \mathbf{x}_c(n-1)
-
\mathbf{W^{in}_{c}} \mathbf{u}_c(n)
\end{align}

\textit{Step 5: Inter-Cluster Regression} -- After the computation of residual term, the next step is to determine the best inter-cluster weights that explains this contribution. For a given cluster $c$, the reservoir states of all other clusters are concatenated. Defining
Define:
\begin{align}
\mathbf{X}^{\mathbf{inter}}(n) =
\begin{bmatrix}
\mathbf{x}_1(n-1) \\
\vdots \\
\mathbf{x}_{c-1}(n-1) \\
\mathbf{x}_{c+1}(n-1) \\
\vdots
\end{bmatrix},
\end{align}
it follows that,
\begin{align}
\mathbf{r}_c(n) = \mathbf{W^{inter}_{c}} \mathbf{X}^{\mathbf{inter}}(n).
\end{align}
Collecting the residuals and inter-cluster states over all training time steps yields the matrix equation,
\begin{align}
R_c = \mathbf{X}^{\mathbf{inter}} \left(\mathbf{W^{inter}_{c}}\right)^{T},
\end{align}
where $\mathbf{W^{inter}_{c}}$ is obtained by solving a ridge regression problem. A regularization term $\lambda_{inter}$ is added which constrains the magnitude of the estimated inter-cluster weights, improving numerical stability and reducing overfitting. The resulting solution is:
\begin{align}
\mathbf{W^{inter}_{c}}= (\mathbf{R_c}^T \mathbf{X}^{\mathbf{inter}})
((\mathbf{X}^{\mathbf{inter}})^T \mathbf{X}^{\mathbf{inter}} + \lambda_{inter} I)^{-1}
\end{align}
This procedure directly estimates the inter-cluster couplings required to reproduce the desired reservoir dynamics inferred from the target outputs and introduce the coordinate descent training scheme in which it alternates between training $\mathbf{W^{out}}$ and $\mathbf{W^{inter}}$. Hence, it adds one more parameter which is the number of epochs ($E$). 

\section{Echo State Property of the CGDT-CESN}
\label{sec:esp}
This section investigates the echo state property (ESP) of the proposed CGDT-CESN framework. The supporting theorems and analytical derivations for ESP are presented in  Appendix \ref{app:props_esn}.

\subsection{Input Signal Generation and Experimental Configurations}
To evaluate whether the proposed model holds properties such as convergence of reservoir states and stability as proposed in Eq. (\ref{eq:esp_def}) and (\ref{eq:necessary}) (Appendix \ref{app:props_esn}), the reservoir is introduced with a qualitatively different class of input signals $\mathbf{u}(n) \in \mathbb{R}^d$.   

\paragraph{Uncorrelated input:}
The baseline input is drawn from an independent and identically distributed (i.i.d.)\ isotropic Gaussian,
\begin{equation}
    \mathbf{u}_{iid}(n) \overset{\text{i.i.d.}}{\sim}
    \mathcal{N}\!\left(\mathbf{0},\, \mathbf{I}_d\right),
    \quad n = 1, \dots, T,
\end{equation}
so that all components are mutually independent at every time step,
$\operatorname{Cov}[\mathbf{u}(n)] = \mathbf{I}_d$.
This i.i.d. multivariate signal ($\mathbf{u}_{iid}$) serves as a worst-case universal probe of the reservoir dynamics, carrying no exploitable correlation structure.

\paragraph{Correlated input:}
The correlated input is drawn from a multivariate Gaussian with a
prescribed covariance structure,
\begin{equation}
    \mathbf{u}(n) \sim \mathcal{N}\!\left(\mathbf{0},\,
    \boldsymbol{\Sigma}\right),
    \qquad
    \boldsymbol{\Sigma} \in \mathbb{R}^{d \times d},
    \quad \boldsymbol{\Sigma} \succ 0.
\end{equation}
Correlated samples  are generated by simulating independent standard Gaussian, $\mathbf{z}(n) \sim \mathcal{N}(\mathbf{0}, \mathbf{I}_d)$ variables and applying the transformation,
\begin{equation}
    \mathbf{u}_{\mathrm{corr}}(n) \;=\; \mathbf{L}\,\mathbf{z}(n)
\end{equation}
where $\mathbf{L}$ is the lower triangular matrix obtained by Cholesky decomposition of  $\boldsymbol{\Sigma}$.

\textit{Self-consistency:}
The correlated signal $\{\mathbf{u}_{\mathrm{corr}}(n)\}_{n=1}^{T}$ serves a dual role: it is used to compute both the Spearman-difference correlation matrix that determines the inter-cluster wiring $P_{{ij}}$ (Section~\ref{spearman_section}), and as the correlated evaluation input.
Evaluation samples are drawn from the empirical covariance
\begin{equation}
    \hat{\boldsymbol{\Sigma}}
    \;=\;
    \frac{1}{T}\,\mathbf{U}_{\mathrm{corr}}^{\top}
    \mathbf{U}_{\mathrm{corr}},
\end{equation}
replacing $\mathbf{L}$ with the Cholesky factor $\hat{\mathbf{L}}$ of $\hat{\boldsymbol{\Sigma}}$. This ensures that the inter-cluster connectivity and the driving signal share the same statistical structure, forming a fully self-consistent pipeline.

\paragraph{Pre- and Post-Training Evaluation:}
The ESP is verified with the following diagnostics state convergence test and largest Lyapunov exponent (LLE) evaluated for two different signals: uncorrelated signal, correlated signal under two training stages: before and after training $\mathbf{W^{inter}}$, yielding the following configurations:
$(\mathbf{u}_{\mathrm{iid}},\;\text{before training})$,
$(\mathbf{u}_{\mathrm{corr}},\;\text{before training})$, and
$(\mathbf{u}_{\mathrm{corr}},\;\text{after training})$.

\emph{Before training} refers to the reservoir state immediately after the data-driven inter-cluster wiring step: the recurrent weight matrix $\mathbf{W}$ has been rewired via the Spearman-difference procedure, and the readout is fitted by standard ridge regression.
\emph{After training} refers to the reservoir state following the full $\mathbf{W^{inter}}$-regularized ridge procedure, which additionally penalises abrupt changes in the readout weights across successive epochs.

In order to maintain consistency, the input signal is held fixed between training stages. Hence, any observed changes in LLE between two stages is therefore attributable solely due to the change in the readout fitting procedure.

\paragraph{State convergence as an ESP Diagnostic:}
\begin{figure}
    \centering
    \includegraphics[width=0.6\linewidth]{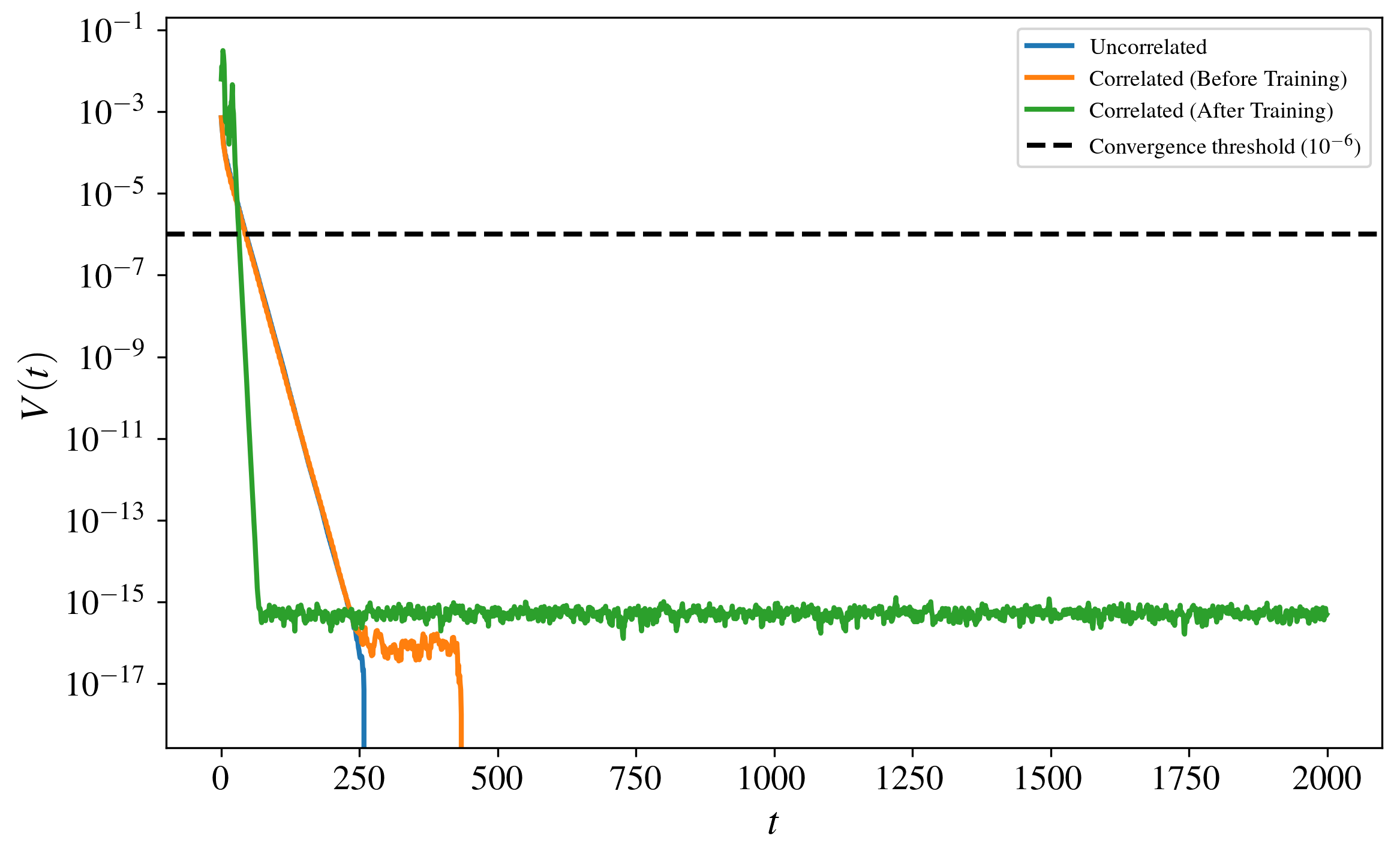}
    \caption{Convergence test of the reservoir states with respect to the Uncorrelated and correlated noise sources before and after training. }
    \label{fig:lyap_divergence}
\end{figure}

To directly assess whether the reservoir satisfies the Echo State
Property, state convergence is computed which should satisfy Eq. (\ref{eq:esp_def}). In order to study the state convergence, two reservoir trajectories $\mathbf{x}_1(n)$ and $\mathbf{x}_2(n)$ are initialized from distinct random initial conditions,

\begin{equation}
    \mathbf{x}_1(0),\; \mathbf{x}_2(0)
    \;\overset{\text{i.i.d.}}{\sim}\;
    \mathcal{U}([-0.5,\; 0.5]^{N}),
\end{equation}
driven by same input sequence. If the difference between both the states after certain time converges near zero, it can be concluded that the reservoir is dependent on present inputs and free from the influence different initial conditions and past inputs. To quantify the convergence, Euclidean distance between the two trajectories at each step is computed as given by,
\begin{equation}
    V(n) \;=\; \left\|\mathbf{x}_1(n) - \mathbf{x}_2(n)\right\|_2,
    \quad n = 1, \dots, T.
\end{equation}

 In practice, convergence is declared when $V(n)$ falls below a threshold $\varepsilon = 10^{-6}$, and the convergence steps $n^{*} = \min\{n : V(n) < \varepsilon\}$ is recorded as a complementary scalar summary. The reservoir states that do not contract or remain bounded away from zero violate the ESP, and hence its predictions become dependent on the arbitrary choice of the initial state rather than on the input history alone.
 
 In Fig. \ref{fig:lyap_divergence}, the dashed black line represents the convergence threshold and the corresponding solid lines, blue represents uncorrelated signal, and orange and green solid lines represent correlated signals before and after training. In all three cases, the reservoir states falls below the threshold after some transient steps $n$, clearly indicating that the ESP holds if the reservoir is introduced with uncorrelated/correlated signals before training. Even the trained reservoir passed state convergence test by converging near zero, indicating that training $\mathbf{W^{inter}}$ does not affect the ESP.
\begin{figure}
    \centering
    \includegraphics[width=0.6\linewidth]{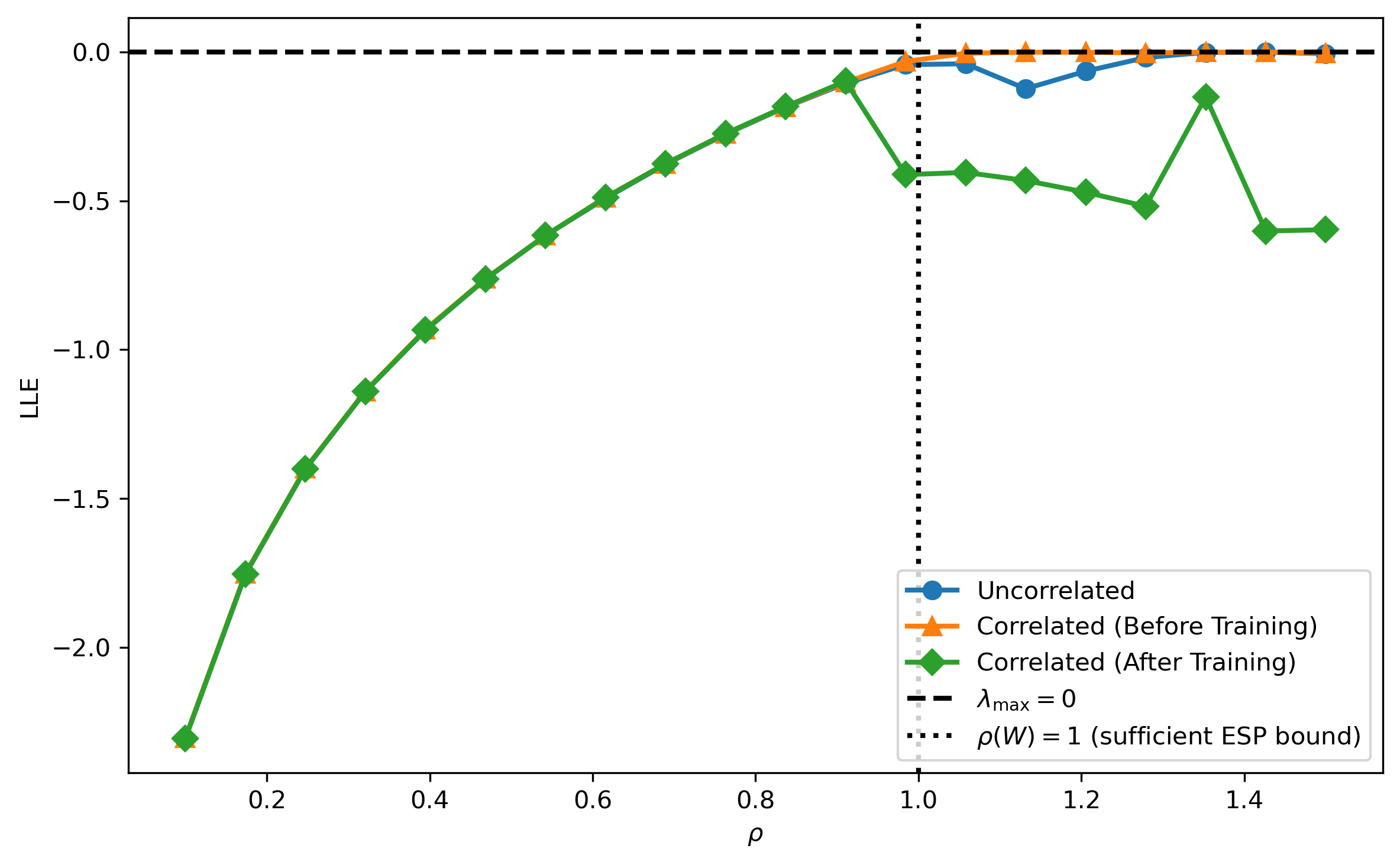}
    \caption{Largest Lyapunov Exponent (LLE) computed for varying spectral radius ($\rho$) to verify the ESP for uncorrelated and correlated input signals before and after training the reservoir. }
    \label{fig:spectral_esp}
\end{figure}

Next, a spectral radius sweep is performed to investigate the stability and ESP of the reservoir through the LLE. The variation of LLE with varying spectral radius characterizes the maintenance of ESP and also identifies the critical point where ESP is lost or divergent reservoir dynamics is observed. Fig. \ref{fig:spectral_esp} represents the computation of the LLE with respect to the varying spectral radius swept over $\rho \in [0.1,\, 1.5]$ (20 uniformly spaced values) with other hyperparameters held fixed for the four configurations represented as, uncorrelated signal ($\bullet$), correlated before($\blacktriangle$) and after ($\blacklozenge$) training. The dashed horizontal lines represents the boundary $\lambda_{max} = 0$ above which the reservoir becomes unstable. The vertical horizontal dotted line is a reference line drawn at $\rho =1$, below which the reservoir could have ESP. As shown in Fig. \ref{fig:spectral_esp}, the ESP holds for all configurations when $\rho < 1$, consistent with the theoretical bound established in Theorem. (\ref{thm:sufficient}). At $\rho \ge 1$, the ESP is violated in the untrained reservoir under both correlated and uncorrelated inputs before training, as demonstrated by the largest Lyapunov exponent $(\lambda_1)$ becoming non-negative, indicating dynamical instability.  In contrast, $\mathbf{W^{inter}}$ trained reservoir with introduced correlated-signal maintains the stability and even holds ESP even after $\rho \ge 1$. This suggests that coordinate-descent training of $\mathbf{W^{inter}}$ regularizes the reservoir dynamics without destabilizing the network.

\section{Applications}
\label{sec:apps}
With the developed data-driven and improved training methodology, the proposed algorithm is applied to real-world problem whose process can be understood with multiple variables realized over time. Particularly, the algorithm is tested on two different multivariate time series: discrete event-driven, and continuous-valued temporal process because such recordings are prevalent in nature. One such challenging and complex prediction task is information cascades in social network which is discrete and event driven in nature. 

\subsection{Information Cascades from Social Media}
In social networks, a cascade emerges as an information propagates through series of  user engagements. Prediction of information cascade remains challenging due to the complex and nonlinear nature of propagation in the social network. In order to study the nature of such complex processes, cascade trees were collected from Twitter benchmark datasets \textbf{Twitter 15/16} \cite{ma2017detect}. Each cascade in the dataset represents a single claim (starting from a source node/user) that propagated through the network via retweets (consequent directed connections). The details of the dataset has been summarized in Table \ref{tab:twitter_data}.
\begin{table}[h]
\caption{Summary of datasets and cascade labels \cite{ma2017detect}}
\centering
\begin{tabular}{lc}
\toprule
\textbf{Dataset} & \textbf{Cascades}  \\
\midrule
Twitter15 & 1,490 \\
Twitter16 & 818\\
\midrule
Combined  & 2,308  \\
\bottomrule
\end{tabular}
\label{tab:twitter_data}
\end{table}

From a cascade, features such as size, breadth, depth and structural virality can be extracted associated with timestamps, thereby enabling the construction of multivariate temporal data. The mathematical forms and its associated meaning can be found in Table \ref{tab:cascade_measures}. Here, $G(n)=(Z(n),E(n))$ denotes the cascade graph at time $n$, $Z(n)$ is the set of activated nodes, $E(n)$ is the set of diffusion edges, $d(z_i,z_j)$ is the shortest-path distance between nodes $z_i$ and $z_j$, and $L_k(n)$ denotes the set of nodes at propagation level $k$. 
\begin{figure}[h!]
    \centering
    \includegraphics[width=0.6\linewidth]{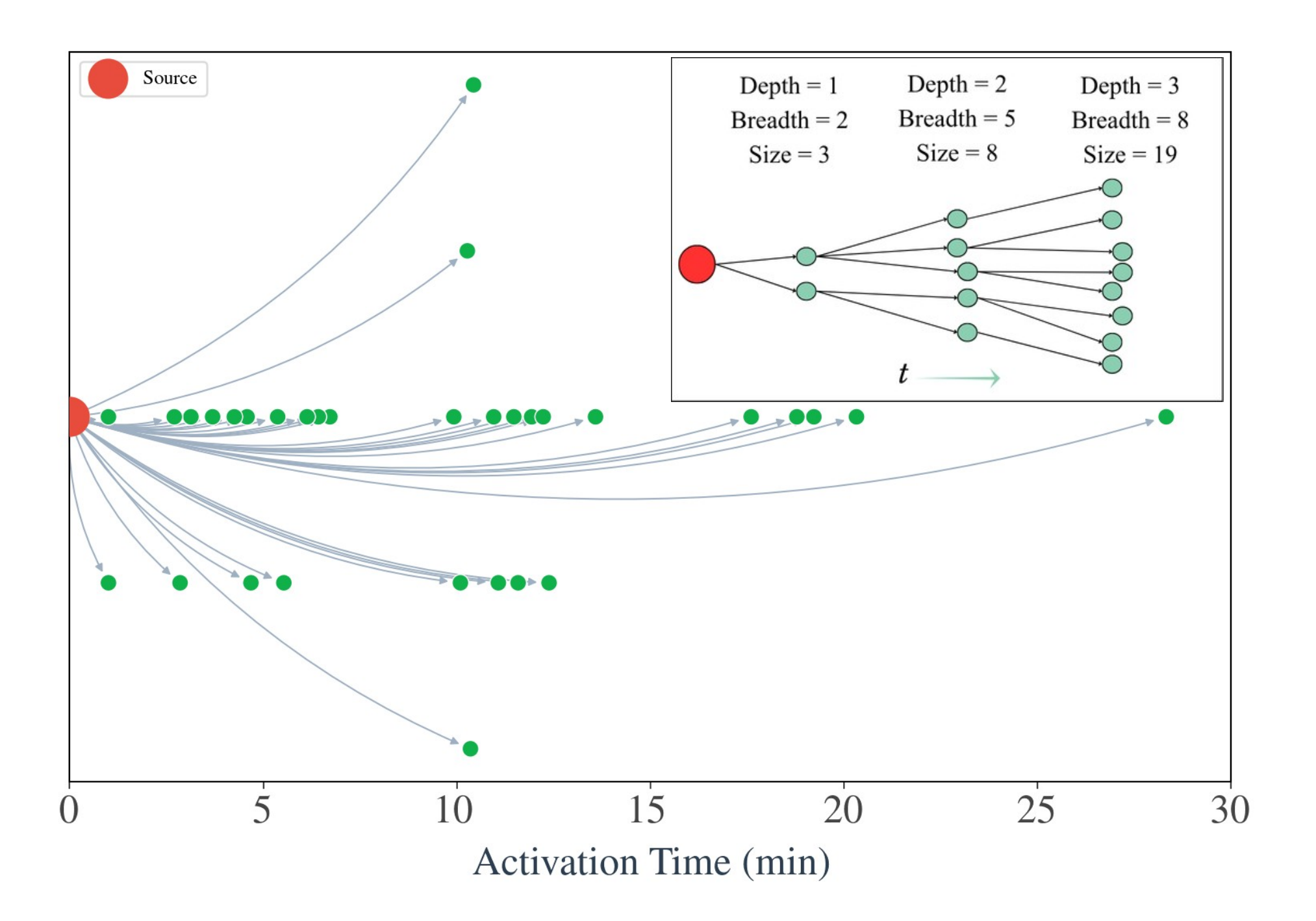}
    \caption{Time evolution of cascade with multiple evolving variables reflecting different aspects of a cascade such as size, breadth and depth.  }
    \label{fig:cascade_diag}
\end{figure}

Fig. \ref{fig:cascade_diag} depicts a representative cascade from the collected dataset, where a source user (red circle) initiates a post and subsequently other users (depicted in green) engage with the source over time (activation time). The inset of Fig. \ref{fig:cascade_diag} represents a schematic illustration of a cascade in which the associated features such as size, breadth and depth are incrementing with respect to time. It is important to note that these features evolve following discrete user activation events, and therefore, cascading process can be regarded as discrete in nature.

The task of predicting the multivariate features of a cascade with respect to time is difficult due to the complexity and variability that each individual information could possess. Hence, prediction of such a complex multivariate time series would be helpful in analyzing the trend and improve the knowledge basis on how a particular information could impact in a social network beforehand. 

\begin{table*}[ht]
\centering
\caption{Time-dependent cascade descriptors and their mathematical definitions.}
\label{tab:cascade_measures}
\renewcommand{\arraystretch}{1.4}
\begin{tabular}{|c|c|p{6cm}|}
\hline
\textbf{Measure} & \textbf{Mathematical Form} & \textbf{Meaning} \\
\hline

Cascade Size &
$S(n)=|Z(n)|$ &
Total number of activated nodes in the cascade up to time $n$. Measures the overall spread. \\
\hline

Cascade Depth &
$D(n)=\max\limits_{z\in Z(n)} d(z_0,z)$ &
Maximum shortest-path distance from the source node $z_0$ to any node in the cascade. Measures the propagation depth. \\
\hline

Cascade Breadth &
$B(n)=\max\limits_k |L_k(n)|$ &
Maximum number of nodes present at any propagation level $k$. Measures the widest parallel spread. \\
\hline

Structural Virality &
$Z_s(n)=\dfrac{1}{|Z(n)|(|Z(n)|-1)}
\sum\limits_{i\neq j} d(z_i,z_j)$ &
Average pairwise shortest-path distance among all nodes. Measures how structurally viral the cascade is. \\
\hline

\end{tabular}
\end{table*}

The dataset consists of 2,308 cascades combining Twitter 15 and Twitter 16, from which a multivariate time series consisting of variables size, breadth, depth and virality can be extracted for each cascade. The data is partitioned into training and testing sets in the ratio of 80\% and 20\%, respectively. In each cascade, only 30\% of the total events are taken into account, since the temporal duration varies across cascades. To analyze the efficiency of the model with respect to the training data volume, the effective training set size is systematically varied from 10\% to 100\% of the training pool, corresponding to 8\% to 80\% of the total dataset. The model is evaluated on a fixed test set for each configuration. Furthermore, the task for the model is designed to predict at three different horizons corresponding to 50\%, 75\% and 100\% of the final cascade evolution. This assessment enables the understanding of data efficiency as well as the impact of prediction horizon on the accuracy of cascade evolution modeling.

To understand the advantages over the conventional ESN, a sensitivity analysis is performed for ESN, \textbf{Correlation Guided CESN (CG-CESN)}(Correlation guidance only) and \textbf{Correlation Guided and Dynamics Trained-CESN (CGDT-CESN)} which operates on both correlation guidance to establish inter-cluster connections as well as trains the weights $\mathbf{W^{inter}}$. The comparison is made to understand the  effects of the additional computational layers added in the CESN framework. The performance of the model is assessed with the metric normalized root mean square (NRMSE), expressed as,

\begin{equation}
\mathrm{NRMSE} =
\sqrt{
\frac{1}{N\,\sigma_\mathbf{y}^2}
\sum_{i=1}^{N} \left(\mathbf{y}_i -\mathbf{\hat{y}}_i\right)^2
}
\end{equation}

\noindent
where $\mathbf{y}_i$ denotes the actual target value, $\hat{y}_i$ represents the predicted value, $N$ is the total number of data samples, and $\sigma_\mathbf{y}^2$ is the variance of the target signal. The NRMSE quantifies the prediction error normalized by the variability of the target data, with lower values indicating better prediction accuracy. Since the time series is multivariate and to obtain a notion of model's performance within a single metric, the average of normalized root mean square ($\xi$ =$\overline{NRMSE}$) is computed.

\begin{table*}[h]
\centering
\caption{Hyperparameters used for different reservoir computing models}
\begin{tabular}{lcccc}
\hline
\textbf{Hyperparameter} & \textbf{ESN} & \textbf{CG-CESN} & \textbf{CGDT-CESN} \\
\hline

Reservoir size ($N_{\text{res}}$)        & 200    & --   & --   \\
Neurons per cluster ($N_c$)              & --      & 50   & 50   \\
Number of clusters ($K$)                 & --      & 4    & 4    \\
Spectral radius ($\rho$)                 & 0.9   & 0.9  & 0.9  \\
Leak rate ($\alpha$)                     & 0.3   & 0.3  & 0.3  \\
Reservoir connectivity ($p_{\text{rc}}$) & 0.3    & --  & -- \\
Intra-cluster connectivity ($p_{\text{in}}$)     & --   & 0.3  & 0.3  \\
Inter-cluster connectivity ($p_{\text{out}}$) & --  & data-driven & data-driven \\
Max inter-cluster connectivity ($p_{\text{out}}^{\max}$) & -- & 0.1 & 0.1 \\
$\mathbf{W^{out}}$ regularization ($\lambda_{out}$)         & $10^{-8}$ & $10^{-8}$ & $10^{-8}$ \\
$\mathbf{W^{inter}}$ regularization ($\lambda_{inter}$)      & --  & -- & $10^{-3}$ \\
$\mathbf{x}^*_c$ regularization ($\lambda_{inv}$)      & --  & -- & $10^{-3}$ \\
Number of epochs ($E$)                        & --  & -- & 10 \\

\hline
\end{tabular}
\label{tab:hyperparams}
\end{table*}

The hyperparameters are fixed as given in the Table \ref{tab:hyperparams}. In order to match the reservoir's dynamic range, all the variables are normalized between 0 to 1. All variables are normalized to the range [0, 1] to match the reservoir's dynamic range. The scaled data is the input signal which is sent to correlation layer and probability of inter-cluster connections ($p_{out}$) is decided. Since there are four variables, four clusters are present in the reservoir and the maximum $p^{max}_{out} = \frac{p_{in}}{K-1}$. With $p_{in} = 0.3$,  maximum cap for the inter-cluster connections is $p^{max}_{out} = 0.1$ such that the modular structure is retained.

\begin{figure}[h!]
    \centering
    \includegraphics[width=0.6\linewidth]{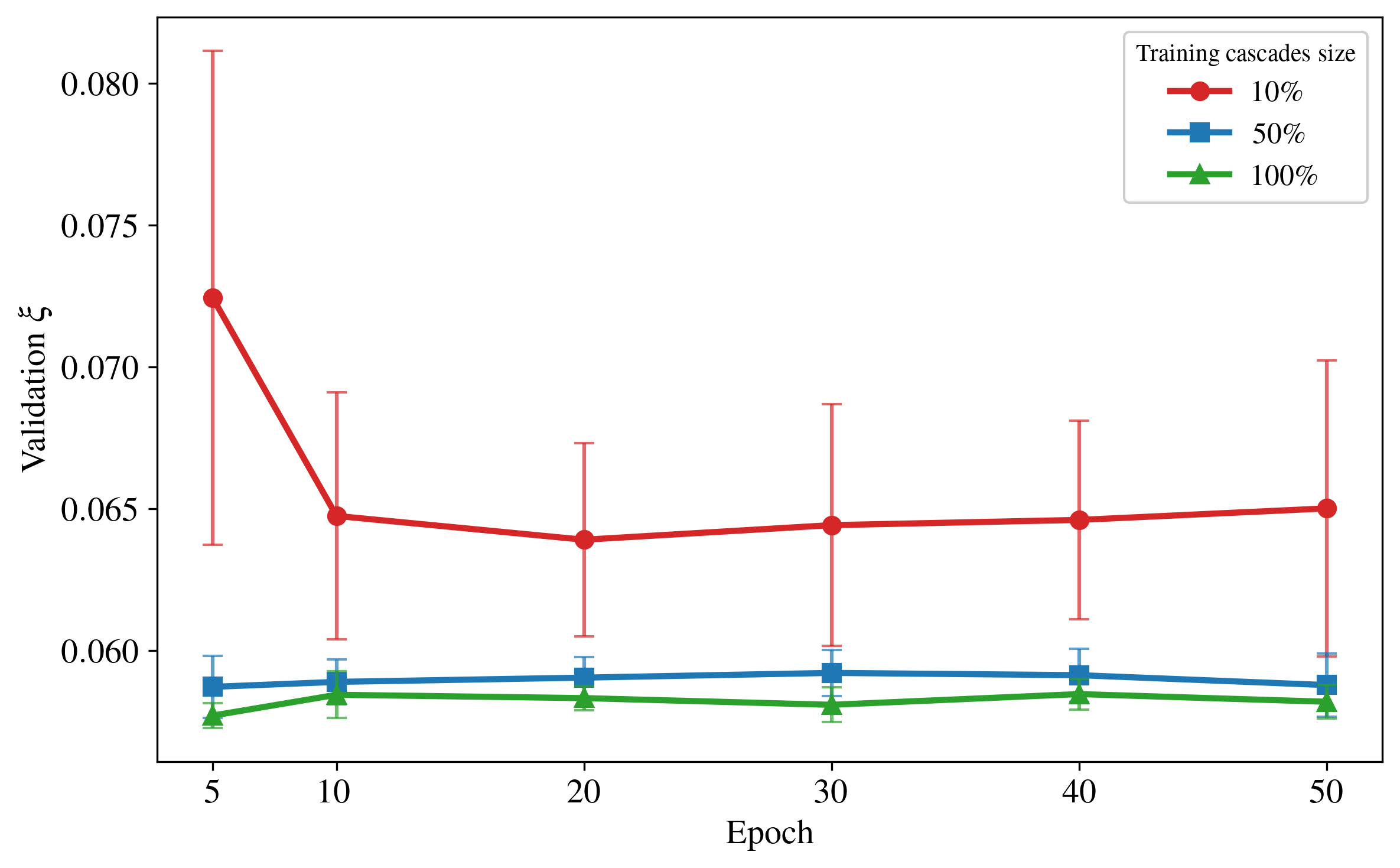}
    \caption{Validation convergence of the CGDT-\textcolor{red}{C}ESN model for different training data fractions ($10\%$, $50\%$, and $100\%$) over 50 Winter refinement epochs. The curves represent the mean validation NRMSE averaged over 10 independent runs, while the error bars denotes standard deviation. A rapid decrease in validation error is observed during the initial epochs, followed by convergence and saturation beyond approximately 10 epochs, indicating that additional refinement iterations provide only marginal improvement in predictive performance.}
    \label{fig:epoch_conv}
\end{figure}

Since the training procedure in the CGDT-CESN, the optimal number of epochs should be determined to ensure the avoidance of underfitting and overfitting of the target. The validation error ($\xi$) is computed for three different training sizes: 10\% ($\bullet$), 50\% ($\blacksquare$), and 100\% ($\blacktriangle$), as presented in Fig. \ref{fig:epoch_conv}, in order to analyze the dependence of the model performance on both the dataset size and the number of epochs. This experiment aims to identify the epoch where the validation $\xi$ converges as a result of training. It can be observed from the figure that for 10\% training size, the model performance is poor at the end of 5th epoch. However, when the number of epochs is increased to 10, $\xi$ reduced significantly and continues to remain the same for subsequent epochs. In contrast, for the 50\% and 100\% training sizes, validation $\xi$ is observed to be considerably lower than that of 10\% training size. Moreover,  convergence is achieved consistently from epoch 10 onward with smaller standard deviations. These observations indicate that the iterative $\mathbf{W^{inter}}$ training (Co-ordinate descent training) plays a more significant role when the model is trained with less samples. Hence, from this experiment, taking the computational cost into account, epoch 10 is considered to be a suitable convergence point across all training sizes. 

\begin{figure*}[h!]
        \centering
    \includegraphics[width=1.0\linewidth]{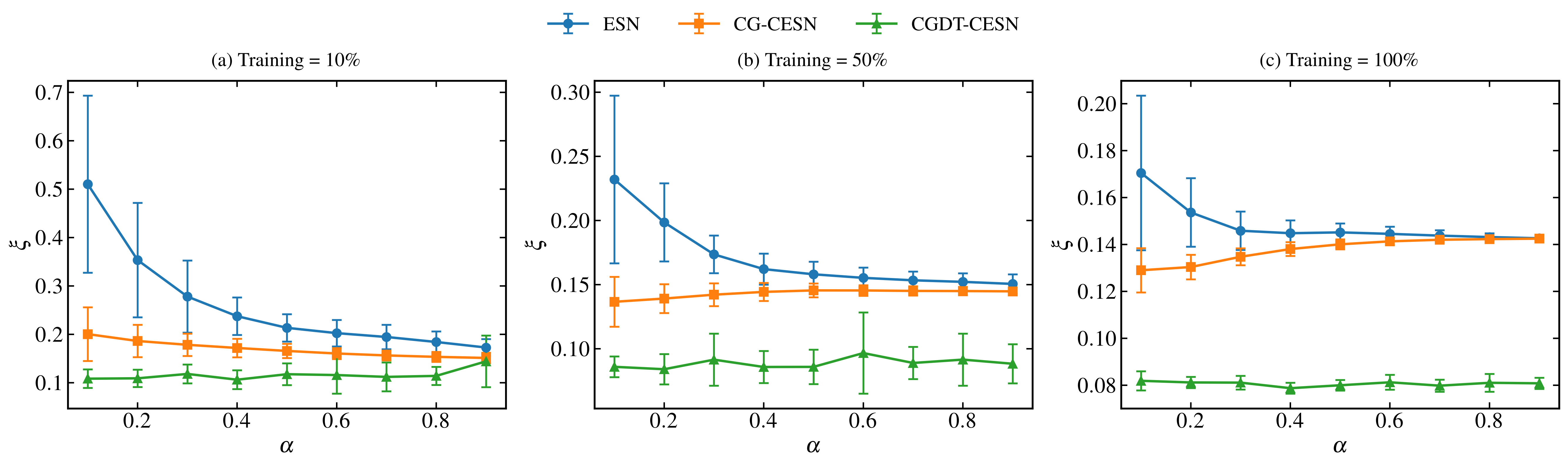}
    \caption{Effect of the leak rate parameter ($\alpha$) on the performance of ESN ($\bullet$), CG-CESN ($\blacksquare$), and CGDT-CESN ($\blacktriangle$) for training sample percentages of 10\% (a), 50\% (b), and 100\% (c). The leak rate is varied over the range ([0.1,1.0]) while all other parameters, including the number of training epochs, are kept fixed.}
    \label{fig:leak_rate}
\end{figure*}

Next, the crucial leak rate parameter ($\alpha$) is tuned over the range $[0.1, 1.0]$ for training sample percentages of 10\% (a), 50\% (b), and 100\% (c) to examine its role in ESN ($\bullet$), CG-CESN ($\blacksquare$), and CGDT-CESN ($\blacktriangle$), as shown in Fig. \ref{fig:leak_rate}. From the figures, it is evident that across all the training sample percentages CGDT-CESN demonstrates consistent and superior performance over the entire range of $\alpha$ as compared to other algorithms. CG-CESN outperforms ESN at lower $\alpha$ but converges to ESN performance as $\alpha$ increases. This analysis highlights the dependence of the algorithm on $\alpha$ that governs the past state influence on the reservoir dynamics and becomes crucial for informed parameter selection.

\begin{figure*}[h!]
        \centering
   \includegraphics[width=1.0\linewidth]{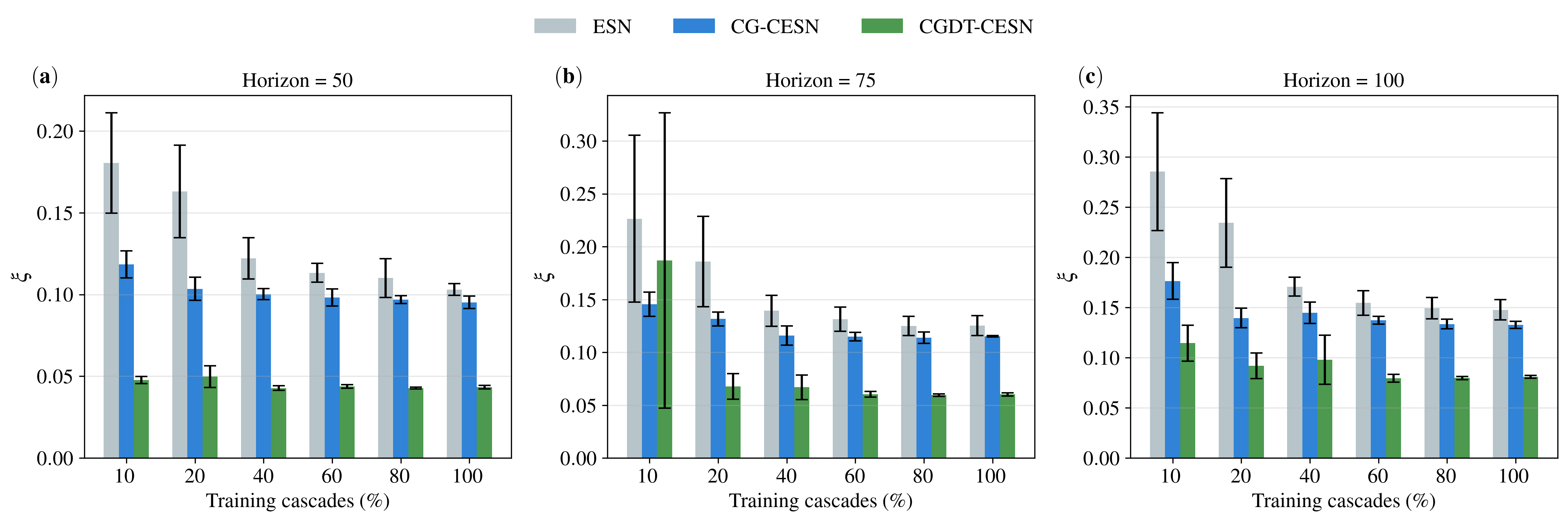}
    \caption{Performance (mean NRMSE, $\xi$) of models as a function of training data proportion (where 100 \% denotes the maximum training sample size corresponding to 80\% of the original data set) evaluated at three prediction horizons (a) 50\%, (b) 75\% and (c) 100\% for 50 realizations. The gray, blue and green bars correspond the ESN, CG-CESN and CGDT-CESN with the error bar depicting the standard deviation }    \label{fig:cascade_assesment}
\end{figure*}

Fig. \ref{fig:cascade_assesment} represents the performance of models ESN, CG-CESN, and CGDT-CESN depicted in gray, blue, and green bars with the error bar depicting the standard deviation across realizations. The assessment is based on the training data proportion ranging from 10\% to 100\%, with 100\% being the maximum training sample corresponding to 80\% of the original data set with evaluation at three prediction horizons (a) 50\%, (b) 75\%, and (c) 100\%. In all three horizons, even with a smaller number of training cascades (10\% - 20\%), the CG-CESN outperforms ESN. This highlights the advantage of data-driven correlation communication establishment between clusters, which understands the inter-relationship between the variables and enhances the prediction quality even at low percentage of training samples. Further, CGDT-CESN significantly improves the prediction accuracy due to the added training of  $\mathbf{W^{inter}}$ for 10 epochs. As the training samples increases, ESN and CG-CESN performs mostly the same which indicates that the sufficient training samples allow the conventional ESN algorithm to perform well. But the accuracy margin provided by CGDT-CESN is consistent over different training samples at all three horizons indicating the adaptation of inter-cluster weights favors and learns accordingly. The error $\xi$ increases as the prediction horizon is getting increased. The error grows mildly with prediction horizon at the 50\% horizon, the target lies on the steep growth portion of the cascade, whereas the 100\% horizon corresponds to the saturation value, which is more sensitive to small mis-estimations.  However, the increase from 50\% to 100\% is relatively small.

\begin{figure*}[h!]
        \centering
    \includegraphics[width=0.8\linewidth]{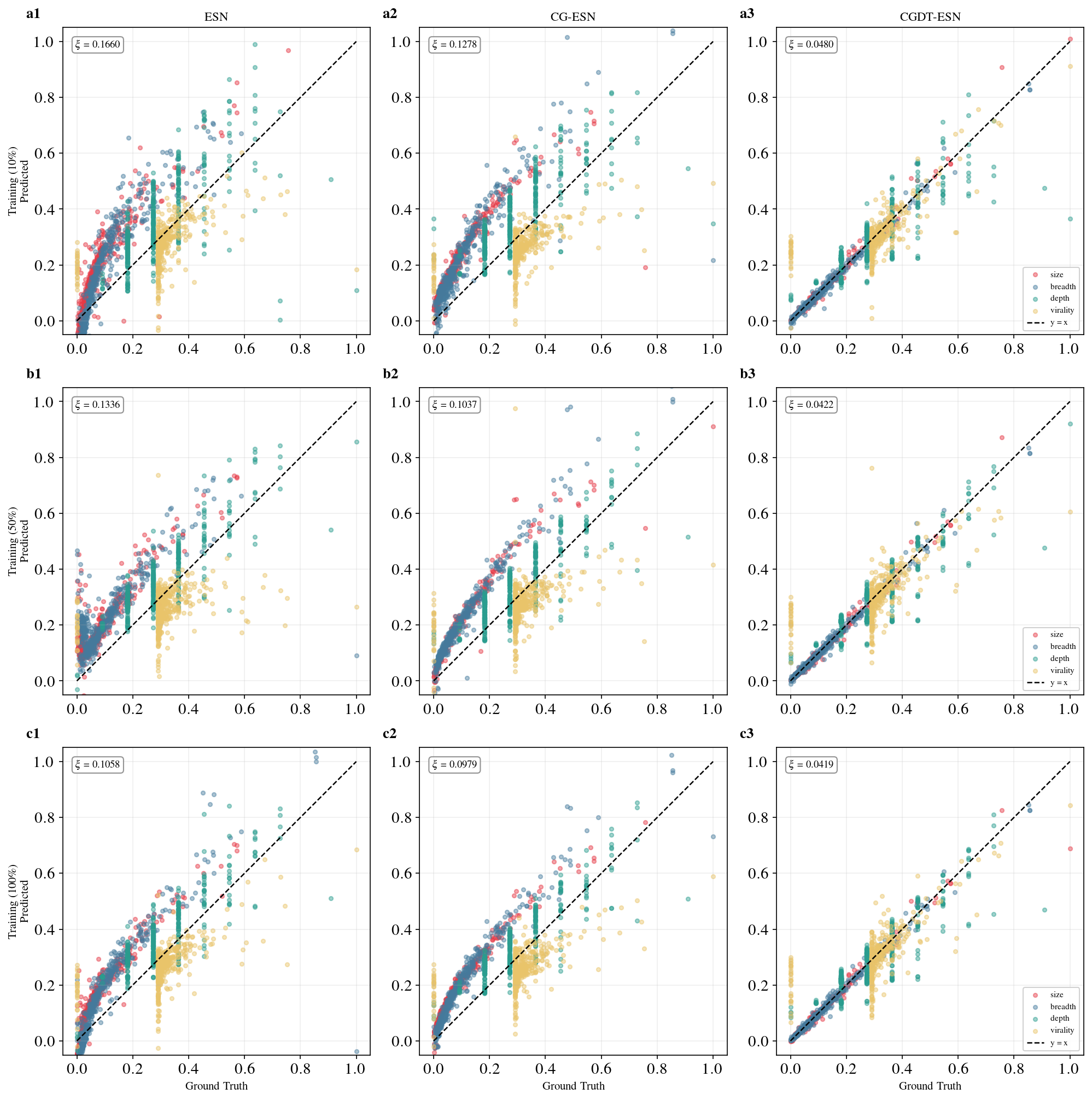}
    \caption{Scatter plots of predicted versus ground truth values for the four cascade features: size (red), breadth (green), depth (blue), and structural virality (yellow), evaluated at prediction horizon $50\%$, across three models (columns) and three training set sizes (rows). All variables are independently normalized to $[0,1]$. The dashed diagonal line ($y=x$) represents perfect prediction; points lying closer to this line indicate higher predictive accuracy. Row (a): training size $10\%$ (panels a1–a3); Row (b): training size $50\%$ (panels b1–b3); Row (c): training size $100\%$ (panels c1–c3). Column 1 (panels a1, b1, c1): ESN; Column 2 (panels a2, b2, c2): CG-CESN; Column 3 (panels a3, b3, c3): CGDT-CESN. The mean normalized root mean square error ($\xi$, averaged across all four cascade features over 50 independent realizations) is reported in the upper left corner of each subplot. Progressively tighter clustering of points around the diagonal with increasing training size reflects improved model generalization, with CGDT-CESN consistently achieving the lowest $\xi$ values, particularly under limited training data.
    }
    \label{fig:prediction}
\end{figure*}
In order to visualize the prediction performance of the models, a comparative analysis of prediction performance for different reservoir computing models, namely ESN, CG-CESN, and CGDT-CESN, across varying training sizes and prediction for horizons 50\% is presented in Fig. \ref{fig:prediction}. Each subplot illustrates the relationship between the normalized ground truth values (x-axis) and the corresponding model predictions (y-axis) for multiple cascade features, including size (red), breadth (green), depth (blue), and virality (yellow). All variables are independently normalized to the range $[0,1]$ to ensure uniform scaling across features. The dashed diagonal line, defined by $y = x$, represents the ideal case of perfect prediction, where the predicted values exactly match the ground truth. Therefore, points lying closer to this diagonal indicate higher predictive accuracy, while deviations from the line reflect prediction errors. As the number of training cascades increases, the scatter points progressively concentrate around the diagonal, demonstrating improved model generalization. Furthermore, the average normalized root mean square error, denoted by $\xi$, is reported within each subplot as a quantitative measure of prediction accuracy. Among the models, CGDT-CESN exhibits the closest alignment of points to the $y = x$ line and the lowest $\xi$ values when the model is trained with low training samples, highlighting the advantage of incorporating both data-driven connectivity and inter-cluster learning in improving predictive performance. Hence, the model achieves high accuracy with limited training data across all horizons, indicating strong sample efficiency and robust generalization.

\subsection{Solar wind dataset}
Solar wind dynamics represents as an evaluation candidate for the model in terms of continuous process, whose recordings involve multiple physical parameters being sampled with respect to time, constituting to a multivariate time series. Dataset is obtained from the Real-Time Solar wind (RSTW) database provided by the Space weather prediction Center (SWPC) \cite{SWPC_RTSW} spanning the period from 1998 to 2025. The measurements from solar wind includes density, speed, and temperature with a five day resolution. The available dataset is already in the curated form with 50th percentile (median), 1st percentile (minimum), and 99th percentile (maximum). For our experiment, maximum values of solar wind density, speed and temperature were chosen. Data pre-processing steps include normalization and smoothening of time series using a Gaussian filter.  The hyperparameters are fixed the same as in Table \ref{tab:hyperparams} except $\alpha = 0.9$, $\lambda_{inter} = 0.5$, $N_c = 100$, Number of epochs = 5 which are found to be best values for prediction.

\begin{table}[ht]
\centering
\caption{Prediction performance ($\xi$) of the models on the solar wind dataset. Results are averaged over 50 realizations and reported as mean $\pm$ standard deviation of the $\xi$.}
\label{tab:solar_nrmse}
\begin{tabular}{lc}
\hline
\textbf{Model} & \textbf{$\xi$} \\
\hline
ESN       & $0.081 \pm 0.0075$ \\
CG-CESN    & $0.067 \pm 0.0025$ \\
CGDT-CESN  & $\mathbf{0.063 \pm 0.0019}$ \\
\hline
\end{tabular}
\end{table}

The prediction performance of the models on the solar wind dataset is tabulated in Table \ref{tab:solar_nrmse} in which the averaged NRMSE ($\xi$) over the variables are computed for 50 independent realizations for each models. The results from Table  \ref{tab:solar_nrmse}  indicates that data-driven inter-cluster wiring favors the CG-CESN model to learn and capture dependencies among the variables than the conventional ESN. Further, additional training of inter-cluster weights ($\mathbf{W^{inter}}$) in CGDT-CESN improvises the performance over CG-CESN. 
\begin{figure*}[h!]
    \centering
    \includegraphics[width=1.0\linewidth]{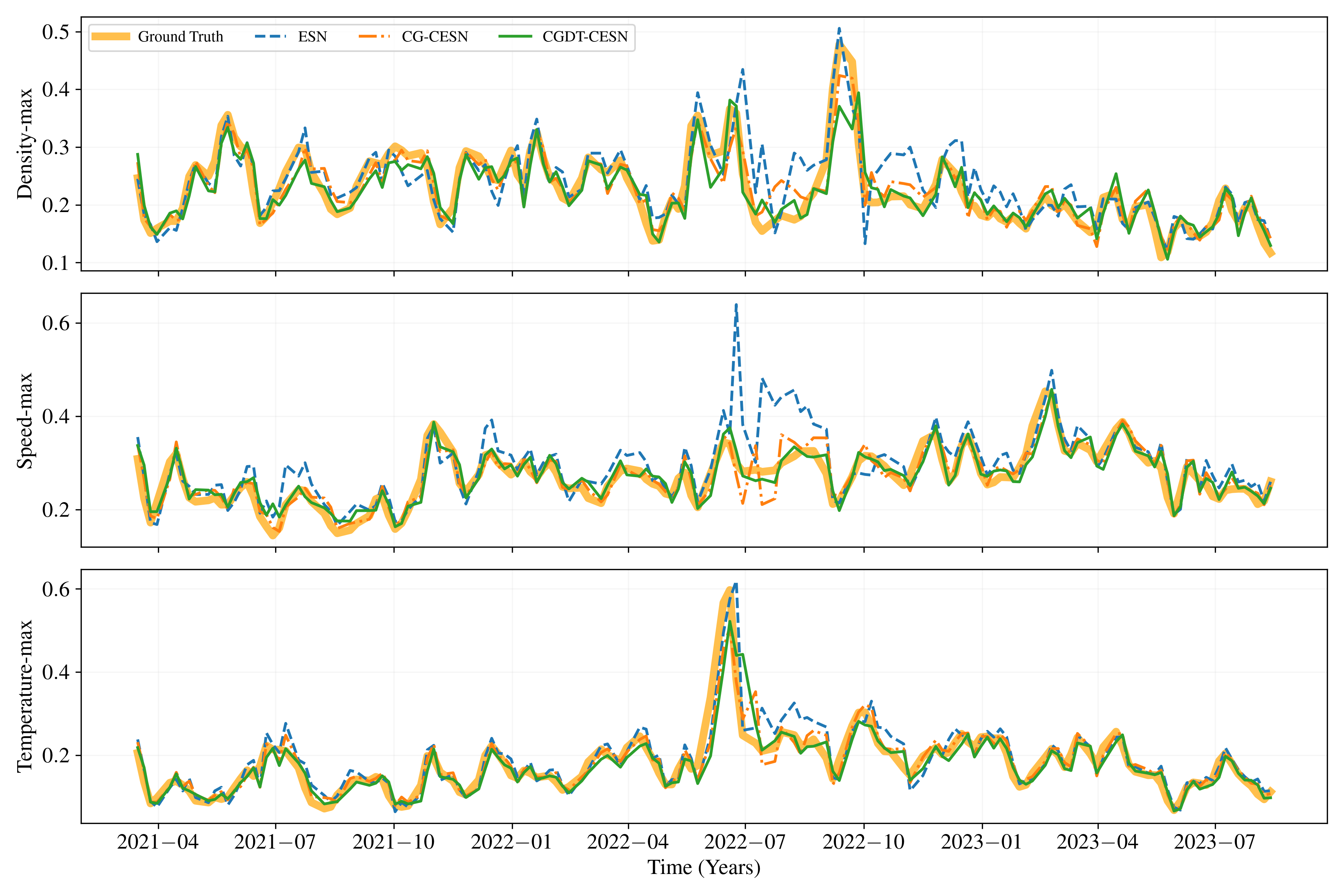}
    \caption{Visualization of solar wind prediction performance on the test set consisting of variables temperature, speed and density. The panels show the ground-truth depicted as thick orange curve and predicted values of  ESN (dashed blue line), CG-CESN (dashed dotted red line) and CGDT-CESN (solid green line). Prediction of CGDT-CESN agrees closely with the ground truth across all variables and also exhibits its stability through time as compared to ESN.}
    \label{fig:solar_pred_ts}
\end{figure*}
Fig. \ref{fig:solar_pred_ts} presents the prediction performance of the models on all the variables of the solar wind dataset with the ground truth (test set) depicted as thick orange curve and the model’s prediction are depicted as follows: ESN - dashed blue line, CG-CESN - dashed dotted red line and CGDT-CESN - solid green line. ESN tends to capture the overall trend of the time series but fails to accurately reproduce several local fluctuations present in the ground truth especially in density and speed variables which deviates in a more pronounced manner. CG-CESN improves the prediction significantly by exploiting the inter-variable dependencies learnt by correlation-guided wiring mechanism. The prediction error is reduced effectively more relative to ESN but discrepancies remain visible near regions of sharp variations. 

The accuracy is further improved by the CGDT-CESN which reconstructs the solar wind dynamics effectively. The prediction of the model closely agrees with the ground truth for all three variables which indicates that the refinement of inter-cluster weights can replicate the interactions between the variables. Most importantly, the model gives stable fit to the data without the signs of over or underfitting as seen in ESN. Hence, CGDT-CESN demonstrates superior capability than that of ESN and CG-CESN which can predict the trends and fluctuations of the solar wind time series.

\subsection{Computational Cost breakdown and Comparison}

\begin{table*}[ht]
\small
\centering
\caption{ Comparison of computational cost for training the model for the Cascade and Solar Wind datasets. Times are reported in seconds as mean $\pm$ standard deviation over 10 realizations. The epochs chosen for cascade and solar wind data sets for CGDT-CESN training is 10 and 5.}
\label{tab:computational_cost}

\renewcommand{\arraystretch}{1.0}

\begin{tabular}{|c|ccc|ccc|}
\hline
\multirow{2}{*}{\textbf{Model}}
&
\multicolumn{3}{c|}{\textbf{Cascade}}
&
\multicolumn{3}{c|}{\textbf{Solar Wind}}
\\
\cline{2-7}

&
\makecell{\textbf{Inter-cluster}\\\textbf{Wiring}}
&
\textbf{Training}
&
\textbf{Total}
&
\makecell{\textbf{Inter-cluster}\\\textbf{Wiring}}
&
\textbf{Training}
&
\textbf{Total}
\\
\hline

ESN & -- & $9.02 \pm 1.09$ & $9.02 \pm 1.09$ & -- & $0.16 \pm 0.03$ & $0.16 \pm 0.03$ \\
\hline

CG-CESN & $341.54 \pm 13.39$ &  $8.72 \pm 1.00$  & $350.26 \pm 13.59$ & $0.30 \pm 0.03$ & $0.49 \pm 0.08$ & $0.79 \pm 0.09$\\
\hline

CGDT-CESN & $341.54 \pm 13.39$ & $125.70 \pm 9.63$  & $467.24 \pm 20.37$ & $0.30 \pm 0.03$ & $0.93 \pm 0.18$ &  $0.95 \pm 0.18$\\
\hline

\end{tabular}
\end{table*}

Since, the prediction performance of the proposed model has been understood on tasks of predicting cascade evolution and solar wind dataset, it is important to understand the computation time required for training the models. All the simulations were performed on the Intel Xeon(R) CPU E5-2630 v2 @ 2.60GHz × 24 processor without utilizing GPU. The comparison of computational cost has been tabulated in Table \ref{tab:computational_cost} since models undergo different computational procedures.

The inter-cluster wiring computation is common to both CG-CESN and CGDT-CESN. The computation time required to calculate the correlation between variables and followed by computing the $p_{out}$, depends on the amount of data used in correlation analysis. It is evident that the volume of data in cascades are larger, which consequently leads to higher computation time as compared to solar wind data set. The training cost of CG-ESN and ESN is more or less similar for both the datasets but the overall compute time when added with the correlation computation, the cost becomes higher for CG-CESN. 

For the CGDT-CESN algorithm, training cost is dependent on the number of epochs chosen. For cascade evolution prediction 10 epochs were used, whereas 5 epochs were enough for the solar wind prediction task. This training methodology imposes additional computational cost but not in a very substantial amount. If the model does not converge within less number of epochs, it will lead to a higher computational expense. But this overhead can be justified by significant prediction performance in addition to the correlation layer than the conventional ESN. The epoch choice and size of the training data decides the computational time. Nevertheless, the proposed algorithm does not cost significant computational expense in terms of time, since it has less number of hyperparameters to be tuned than the traditional RNN architectures.  

\section{Conclusion}
\label{sec:conc}

 An innovative data-driven reservoir design and training methodology for ESN is proposed for handling the challenging task of predicting multivariate time series. This algorithm exploits the cluster-specific specialization of variables, and learns the correlations among variables. The computed correlation helps in establishing connections between clusters. Cluster-specific regression effectively captures a particular variable's pattern. One important thing to note that is that the method for computing the correlation may vary for the data under study and has to be chosen appropriately as there is no universal method which can capture the accurate relationship for all types of process. Hence, up to this computation layer, the algorithm is named Correlation-guided CESN. A novel step includes modelling the inter-cluster connections that undergo recursive training alternating between adjusting the output weights and the inter-cluster weights. This leads to convergence of the prediction satisfactorily near the ground truth than the conventional ESN. The proposed algorithm is therefore named to be Correlation-guided and Dynamics Trained CESN. The algorithm's convergence and echo state property is verified through experiments with correlated and uncorrelated signals.

To test the algorithm, difficult tasks such as the prediction of Information Cascade growth from social media and solar wind datasets, whose underlying process are different and multivariate in nature, are undertaken. Detailed analysis of the Twitter cascade data led to the extraction of variables such as size, depth, breadth, and virality. Extensive experiments were performed to understand the convergence of CGDT-CESN across different training samples, and the effect of the leak rate at different training sample sizes. The application of the models show that correlation-guided wiring of inter-cluster connections improves performance more than conventional ESNs. Superior performance is observed in the CGDT-CESN prediction, due to learning inter-cluster weights in addition to data-driven establishment of inter-cluster connections. This highlights the necessity of adjusting the interaction between the clusters that specialize in adapting to single-variable dynamics, by also learning the influence of weights on the neighboring clusters. Most importantly, even with a small number of training samples, the proposed model, CGDT-CESN performs significantly better than the CG-CESN and ESN at different horizons assuring the effective learning. On application of the model for the solar wind dataset, CG-CESN and CGDT-CESN enhanced the quality of prediction. Further, it is found that the computational cost of CGDT-CESN is not substantially higher than that of the conventional ESN, and the superior prediction performance justifies the additional computational time. 

Modern time series forecasting research focuses models that can provide strong predictive performance with competitive forecasting performance while significantly reducing trainable parameters, training complexity, and computational cost. The proposed algorithm occupies a position at the intersection of forecasting accuracy, and computational efficiency. Overall, in CGDT-CESN, the data-driven inference of coupling topology between the clusters opens new avenues in terms of simultaneously processing the multiple variables offering generalization and flexibility in application for any multivariate time series. The training of the inter-cluster weights indirectly tries to find the exact coupling relationship between the variables apart from being a curve-fitting procedure. Future research may point towards defining the required number of neurons and the required connection density within a cluster for modeling a variable in a data-driven fashion. 

\appendix
\section*{APPENDIX}

\renewcommand{\theequation}{A.\arabic{equation}}
\setcounter{equation}{0}

\section{Theoretical Properties of the CGDT-CESN framework}

\label{app:props_esn}

The theoretical properties of ESN is deep rooted in the echo state property that ensures the reservoir states being uniquely determined by the driving input sequence rather than the initial conditions. ESP assures the reservoir's ability to generate rich dynamical representations of inputs while also preserving the relevant memory of the past information.

The global reservoir state at a time step $n$ is given by,
\begin{equation}
    \mathbf{x}(n)
    = \bigl[\mathbf{x}_1(n)^\top,\,
             \mathbf{x}_2(n)^\top,\,
             \ldots,\,
             \mathbf{x}_K(n)^\top\bigr]^\top
    \in \mathbb{R}^{N},
    \label{eq:global_state}
\end{equation}
where $\mathbf{x}_c(n) \in \mathbb{R}^{N_c}$ is the state of cluster $c$. The reservoir evolves according to the leaky-integrator update (Eq. \ref{eq:cluster_update}).

We define the global state transition function
$F : \mathbb{R}^{N_u} \times \mathbb{R}^{N} \to \mathbb{R}^{N}$ as
\begin{equation}
    F(\mathbf{u},\mathbf{x})
    = (1-\alpha)\,\mathbf{x}
      + \alpha\,\tanh\!\bigl(
          \mathbf{W}\mathbf{x} + \mathbf{W}_{\mathbf{in}}\mathbf{u}
        \bigr),
    \label{eq:F_def}
\end{equation}
so that Eq. (\ref{eq:x(t)}) is written compactly as
$\mathbf{x}(n) = F(\mathbf{u}(n), \mathbf{x}(n-1))$ with full reservoir weight matrix decomposed as $\mathbf{W} = \mathbf{W^{intra}} + \mathbf{W^{inter}}$.
For a sequence of inputs
$\mathbf{s}_N = [\mathbf{u}(1), \mathbf{u}(2), \ldots, \mathbf{u}(N)]$
and an initial state $\mathbf{x} \in \mathbb{R}^{N}$, the
iterated map $\hat{F}(\mathbf{s}_N, \mathbf{x})$ denotes the
reservoir state obtained by driving the network with
$\mathbf{s}_N$ starting from $\mathbf{x}$, defined recursively as
\begin{equation}
    \hat{F}(\mathbf{s}_N,\mathbf{x})
    =
    \begin{cases}
        \mathbf{x}, & \text{if } N = 0,\\[4pt]
        F\!\bigl(\mathbf{u}(N),\;\\
          \hat{F}([\mathbf{u}(1),\ldots,\mathbf{u}(N-1)],
                  \mathbf{x})\bigr), & \text{if } N \geq 1.
    \end{cases}
    \label{eq:Fhat}
\end{equation}
\subsection{The Echo State Property}
\label{sec:esp:definition} 
\begin{definition}[Echo State Property]
\label{def:esp}
The CGDT-CESN governed by Eq. (\ref{eq:x(t)}) is said to
possess the Echo State Property if, for every input
sequence $\mathbf{s}_N$ of length $N$ and for every pair of
initial states $\mathbf{x}, \mathbf{x}' \in \mathbb{R}^{N}$,
\begin{equation}
    \bigl\|\hat{F}(\mathbf{s}_N, \mathbf{x})
           - \hat{F}(\mathbf{s}_N, \mathbf{x}')\bigr\|
    \;\longrightarrow\; 0
    \quad \text{as } N \to \infty.
    \label{eq:esp_def}
\end{equation}
\end{definition}

\subsection{Necessary Condition for the ESP}
\label{sec:esp:necessary}
 
We first establish a necessary condition by linearising the
reservoir dynamics around the zero state.
 
\begin{theorem}[Necessary Condition for ESP]
\label{thm:necessary}
A necessary condition for the CGDT-CESN to hold the ESP is
\begin{equation}
    \rho\!\bigl((1-\alpha)\mathbf{I} + \alpha\mathbf{W}\bigr) < 1.
    \label{eq:necessary}
\end{equation}
\end{theorem}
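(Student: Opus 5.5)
The plan follows the classical argument of Jaeger \cite{jaeger2001echo}, adapted to the leaky-integrator map $F$ of Eq.~(\ref{eq:F_def}). We prove the contrapositive: if $\rho\bigl((1-\alpha)\mathbf{I}+\alpha\mathbf{W}\bigr)\ge 1$, there is an admissible input sequence for which two distinct trajectories fail to converge, so Eq.~(\ref{eq:esp_def}) is violated. Since Definition~\ref{def:esp} quantifies over \emph{every} input sequence, it suffices to exhibit one such sequence. We take the null input $\mathbf{u}(n)\equiv\mathbf{0}$. With this input, $\mathbf{x}=\mathbf{0}$ is a fixed point of $F(\mathbf{0},\cdot)$ because $\tanh(\mathbf{0})=\mathbf{0}$. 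The autonomous map $G(\mathbf{x})=(1-\alpha)\mathbf{x}+\alpha\tanh(\mathbf{W}\mathbf{x})$ therefore has Jacobian at the origin
\begin{equation*}
DG(\mathbf{0})=(1-\alpha)\mathbf{I}+\alpha\,\mathrm{diag}\bigl(1-\tanh^2(\mathbf{0})\bigr)\mathbf{W}=(1-\alpha)\mathbf{I}+\alpha\mathbf{W}.
\end{equation*}
We will note that the decomposition $\mathbf{W}=\mathbf{W^{intra}}+\mathbf{W^{inter}}$ plays no role. The statement applies to whatever $\mathbf{W}$ is in force, both before and after training $\mathbf{W^{inter}}$.

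The second step treats the clean case $\rho(DG(\mathbf{0}))>1$. Here the origin is a linearly unstable fixed point. By the stable/unstable manifold theorem, or more elementarily by a Perron-type argument along the eigenvector (or the real invariant plane for a complex pair) of an eigenvalue $\mu$ with $|\mu|>1$, some trajectory starting from $\mathbf{x}_0=\varepsilon\mathbf{v}$, arbitrarily close to $\mathbf{0}$, moves away to distance at least some fixed $\delta>0$. We compare this trajectory with the one started at $\mathbf{x}'=\mathbf{0}$, which stays at $\mathbf{0}$ forever. Their difference does not tend to zero. More precisely, $\limsup_n\|\hat F(\mathbf{s}_n,\mathbf{x}_0)\|\ge\delta$: once a trajectory is at distance $\delta$ it may wander, but we can pick initial points so that it returns near zero only finitely often. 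Alternatively, we argue by contradiction that if every trajectory converged to $\mathbf{0}$, the origin would be attracting, contradicting $|\mu|>1$. This contradiction version is cleaner, so we will use it: ESP with null input forces every orbit of $G$ to converge to the unique orbit $\mathbf{0}$, which makes $\mathbf{0}$ a global attractor. An attracting hyperbolic fixed point cannot have an unstable eigenvalue, because the unstable manifold is nontrivial.

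The main obstacle is the boundary case $\rho=1$. There linearization is inconclusive, and for $\tanh$ the cubic term is contracting, so the origin may in fact attract. Indeed, the paper's own experiments suggest the trained reservoir keeps ESP even with $\rho\ge1$. We would handle this in one of two ways. The first is to note that the standard (Jaeger-style) necessary condition is really stated for linear stability, that is, with the non-strict reading in which a neutral eigenvalue already rules out uniform, exponential convergence. We would then prove that exponential ESP fails when $\rho=1$, since $\|DG(\mathbf{0})^n\|\not\to0$. The second is to add a nonzero constant input bias so that the operating point shifts. Our first attempt will be the first option, which proves that the strict inequality in Eq.~(\ref{eq:necessary}) is necessary for the ESP to hold uniformly (exponentially) near the zero state. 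We will state explicitly that the purely asymptotic version at $\rho=1$ is not covered by the linearization argument.
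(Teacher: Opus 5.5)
You handle $\rho(\mathbf{J}_F)>1$ exactly as the paper does: null input, the fixed point $\mathbf{0}$, the Jacobian $(1-\alpha)\mathbf{I}+\alpha\mathbf{W}$, and the observation that the ESP would force every orbit onto the zero orbit. At the boundary your caution is right and the paper's proof is wrong. The paper asserts that stability of $\mathbf{0}$ requires $\mathbf{J}_F^n\to\mathbf{0}$, and that for $\rho(\mathbf{J}_F)\ge 1$ some nearby state diverges under zero input. Both claims fail at equality, where the cubic term of $\tanh$ decides the behaviour.

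In fact the statement is false there, so no argument can remove your ``main obstacle''. Take $\mathbf{W}=\mathbf{I}$ and $\alpha\in(0,1]$, so $\rho(\mathbf{J}_F)=1$. The update decouples. For two trajectories $\mathbf{x}(n),\mathbf{x}'(n)$ driven by the same input, the coordinate difference $\Delta_i(n)=[\mathbf{x}(n)-\mathbf{x}'(n)]_i$ obeys
\begin{equation*}
|\Delta_i(n)|=\bigl[(1-\alpha)+\alpha m_n\bigr]\,|\Delta_i(n-1)|\le\bigl[1-\alpha\bigl(1-\phi(|\Delta_i(n-1)|)\bigr)\bigr]\,|\Delta_i(n-1)|.
\end{equation*}
Here $m_n\in(0,1]$ is the mean of $\mathrm{sech}^2$ over an interval of length $|\Delta_i(n-1)|$. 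It is bounded by $\phi(s)=\tanh(s/2)/(s/2)$, the mean over the centred interval, and $\phi$ is decreasing with $\phi(s)<1$ for $s>0$. Hence $|\Delta_i(n)|$ is nonincreasing. A positive limit $d^*$ would keep the bracket at most $1-\alpha(1-\phi(d^*))<1$ forever, forcing decay to $0$, a contradiction. So Definition~\ref{def:esp} holds for every input sequence while $\rho(\mathbf{J}_F)=1$. A norm version of the same argument covers any permutation matrix $\mathbf{W}$, which can couple clusters. What your $\rho>1$ argument actually proves is the correct necessary condition $\rho(\mathbf{J}_F)\le 1$, the leaky analogue of Jaeger's original condition (stated for spectral radius $>1$).

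Neither fallback rescues the strict version.
\begin{itemize}
\item Option (i) is sound but concerns a stronger, exponential notion of ESP. Exponential contraction near $\mathbf{0}$ under zero input gives $\|\mathbf{J}_F^n\|\le C\lambda^n$ with $\lambda<1$, hence $\rho(\mathbf{J}_F)<1$. Definition~\ref{def:esp}, however, is purely asymptotic.
\item Option (ii) cannot work. In the example above the ESP holds for all inputs, and at a shifted operating point the relevant Jacobian is $(1-\alpha)\mathbf{I}+\alpha\,\mathrm{diag}\bigl(1-\tanh^2(\cdot)\bigr)\mathbf{W}$, not the matrix in Eq.~(\ref{eq:necessary}).
\end{itemize}

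Separately, your $\rho>1$ contradiction step is thinner than it looks; the paper is equally informal here. ``Every orbit converges to $\mathbf{0}$'' is global attractivity, not Lyapunov stability, and $\mathbf{0}$ need not be hyperbolic. A nontrivial unstable manifold therefore does not by itself exclude orbits that leave and later return.

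One way to close this, at least when $\mathbf{J}_F$ is invertible:
\begin{itemize}
\item An orbit converging to $\mathbf{0}$ eventually lies in a local center-stable manifold, which has positive codimension when some $|\mu|>1$.
\item The basin of $\mathbf{0}$ is then a countable union of preimages of a null set.
\item Under the analytic map $G$, with $\det DG\not\equiv 0$, these preimages are null, so the basin is null rather than all of $\mathbb{R}^N$.
\end{itemize}
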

 
\begin{proof}
The Jacobian of $F$ with respect to $\mathbf{x}$, evaluated at
the zero state $\mathbf{x} = \mathbf{0}$ and under zero input
$\mathbf{u} = \mathbf{0}$, is
\begin{equation}
    \mathbf{J}_F
    = \frac{\partial F}{\partial \mathbf{x}}
      \bigg|_{\mathbf{x}=\mathbf{0},\,\mathbf{u}=\mathbf{0}}
    = (1-\alpha)\mathbf{I}
      + \alpha\,\mathrm{diag}\!\bigl(1 - \tanh^2(\mathbf{0})\bigr)\,
        \mathbf{W}.
    \label{eq:jacobian_general}
\end{equation}
Since $\tanh(0) = 0$, the diagonal derivative matrix evaluates to
the identity, giving
\begin{equation}
    \mathbf{J}_F = (1-\alpha)\mathbf{I} + \alpha\mathbf{W},
    \label{eq:jacobian_zero}
\end{equation} 
where W = \[
\begin{bmatrix}
\mathbf{W^{intra}_{1}} & \mathbf{{W}^{inter}_{(1,2)}}  &\mathbf{{W}^{inter}_{(1,3)}}  & \mathbf{{W}^{inter}_{(1,4)}}  \\
\mathbf{{W}^{inter}_{(2,1)}}  & \mathbf{W^{intra}_{2}} & \mathbf{{W}^{inter}_{(2,3)}}  & \mathbf{{W}^{inter}_{(2,4)}}  \\
\mathbf{{W}^{inter}_{(3,1)}}  & \mathbf{{W}^{inter}_{(3,2)}}  & \mathbf{W^{intra}_{3}} & \mathbf{{W}^{inter}_{(3,4)}}  \\
\mathbf{{W}^{inter}_{(4,1)}}  & \mathbf{{W}^{inter}_{(4,2)}}  & \mathbf{{W}^{inter}_{(4,3)}}  & \mathbf{W^{intra}_{4}}
\end{bmatrix}
\]
with the weight matrix $\mathbf{W}$ represented for $K = 4$ clusters.
The linearised system about the zero state under zero input is
$\mathbf{x}(n) = \mathbf{J}_F\,\mathbf{x}(n-1)$, whose solution
is $\mathbf{x}(n) = \mathbf{J}_F^n\,\mathbf{x}(0)$. For the zero
state to be a stable fixed point it is necessary that
$\mathbf{J}_F^n \to \mathbf{0}$ as $n \to \infty$, which holds if
and only if $\rho(\mathbf{J}_F) < 1$. If
$\rho(\mathbf{J}_F) \geq 1$, there exists an initial state
$\mathbf{x}'$ arbitrarily close to $\mathbf{0}$ whose trajectory
under zero input diverges from $\mathbf{0}$, so the two
trajectories starting from $\mathbf{0}$ and $\mathbf{x}'$ never
converge, violating Definition~\ref{def:esp}. Therefore, the ESP
requires $\rho(\mathbf{J}_F) = \rho\bigl((1-\alpha)\mathbf{I}
+ \alpha\mathbf{W}\bigr) < 1$.
\end{proof}

\begin{remark}
\label{rem:eigenvalue_form}
Let $\lambda_1,\ldots,\lambda_N$ denote the eigenvalues
of~$\mathbf{W}$. The eigenvalues of~$\mathbf{J}_F$ are
$\mu_i=(1{-}\alpha)+\alpha\lambda_i$, and by the triangle
inequality
\begin{equation}
  |\mu_i|
  = \bigl|(1{-}\alpha)+\alpha\lambda_i\bigr|
  \le (1{-}\alpha)+\alpha|\lambda_i|
  \le (1{-}\alpha)+\alpha\,\rho(\mathbf{W}).
  \label{eq:mu_bound}
\end{equation}
 The eigenvalues of
$\mathbf{J}_F$ are $(1-\alpha) + \alpha\lambda_i$, so the
condition in Theorem~\ref{thm:necessary} reduces to
\begin{equation}
    \max_{i}\,|(1-\alpha) + \alpha\lambda_i|
    \le (1-\alpha) + \alpha\,\rho(\mathbf{W})
    < 1.
    \label{eq:necessary_rho}
\end{equation}
For the asymmetric $\mathbf{W}$, $\rho(\mathbf{W}) < 1$
remains sufficient but the ESP may hold even for $\rho(\mathbf{W}) \geq 1$, as confirmed experimentally in Fig. \ref{fig:spectral_esp}.
\end{remark}

\subsection{Sufficient Condition for the ESP}
\label{sec:esp:sufficient} 
We now show that the same spectral radius condition is also
sufficient by establishing that $F$ is a global contraction
mapping.
 
\begin{theorem}
\label{thm:sufficient}
If $L = (1-\alpha) + \alpha\|\mathbf{W}\|_2 < 1$, where $\|\mathbf{W}\|_2 = \sigma_{\max}(W)$ is the largest singular value of the full reservoir weight matrix, then $F$ is a
global contraction mapping and the CGDT-CESN possesses the ESP.
Specifically, for any input sequence $\mathbf{s}_N$, any initial
states $\mathbf{x}, \mathbf{x}' \in \mathbb{R}^{N}$, and any
bounded state space with diameter $D$,
\begin{equation}
    \bigl\|\hat{F}(\mathbf{s}_N, \mathbf{x})
           - \hat{F}(\mathbf{s}_N, \mathbf{x}')\bigr\|
    \leq L^N D
    \;\longrightarrow\; 0
    \quad\text{as } N \to \infty.
    \label{eq:esp_bound}
\end{equation}
\end{theorem}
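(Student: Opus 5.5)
The plan is to show that, for each fixed input $\mathbf{u}$, the map $\mathbf{x}\mapsto F(\mathbf{u},\mathbf{x})$ from Eq. (\ref{eq:F_def}) is globally Lipschitz in the Euclidean norm with constant $L=(1-\alpha)+\alpha\|\mathbf{W}\|_2$, uniformly in $\mathbf{u}$. The bound (\ref{eq:esp_bound}) then follows by induction along the recursion (\ref{eq:Fhat}).

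The first step is the one-step estimate. Take $\mathbf{x},\mathbf{x}'\in\mathbb{R}^N$ and the same input $\mathbf{u}$. The triangle inequality gives
\begin{equation*}
\|F(\mathbf{u},\mathbf{x})-F(\mathbf{u},\mathbf{x}')\|_2 \le (1-\alpha)\|\mathbf{x}-\mathbf{x}'\|_2 + \alpha\,\bigl\|\tanh(\mathbf{W}\mathbf{x}+\mathbf{W}_{\mathbf{in}}\mathbf{u})-\tanh(\mathbf{W}\mathbf{x}'+\mathbf{W}_{\mathbf{in}}\mathbf{u})\bigr\|_2 .
\end{equation*}
The scalar function $\tanh$ is $1$-Lipschitz, since $|\tanh'|=1-\tanh^2\le 1$, and it acts componentwise. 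Hence the vector map $\tanh:\mathbb{R}^N\to\mathbb{R}^N$ is $1$-Lipschitz in $\|\cdot\|_2$. The input term $\mathbf{W}_{\mathbf{in}}\mathbf{u}$ cancels inside the difference, so the second term is at most $\alpha\|\mathbf{W}(\mathbf{x}-\mathbf{x}')\|_2\le\alpha\|\mathbf{W}\|_2\|\mathbf{x}-\mathbf{x}'\|_2$. Combining the two terms yields the Lipschitz constant $L$. Here $\mathbf{W}=\mathbf{W^{intra}}+\mathbf{W^{inter}}$ is the full block matrix, so the estimate holds equally before and after $\mathbf{W^{inter}}$ is trained, provided the trained matrix still satisfies $L<1$. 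When $L<1$, $F(\mathbf{u},\cdot)$ is a contraction for every $\mathbf{u}$.

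The second step is the induction over the input sequence. Set $d_n=\|\hat F(\mathbf{s}_n,\mathbf{x})-\hat F(\mathbf{s}_n,\mathbf{x}')\|_2$. Then $d_0=\|\mathbf{x}-\mathbf{x}'\|_2\le D$, because both initial states lie in the bounded state space of diameter $D$. Both trajectories receive the same input $\mathbf{u}(n)$, so the one-step estimate gives $d_n\le L\,d_{n-1}$, and therefore $d_N\le L^N D$. Since $0\le L<1$, we have $L^N D\to 0$, which is exactly Definition~\ref{def:esp}. The bounded state space is not needed for the argument itself, since $d_0$ can be any finite number. If desired, it can be justified by noting that from any bounded initial set the iterates remain bounded: each component of $\tanh$ lies in $[-1,1]$, so $\|F(\mathbf{u},\mathbf{x})\|_\infty\le(1-\alpha)\|\mathbf{x}\|_\infty+\alpha$.

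The main obstacle is not technical but a matter of wording. The theorem is stated in terms of $\|\mathbf{W}\|_2$, not $\rho(\mathbf{W})$, whereas the surrounding text describes it as showing that the same spectral-radius condition is sufficient. The proof should therefore state explicitly that it uses $\sigma_{\max}(\mathbf{W})\ge\rho(\mathbf{W})$. With that inequality, $L<1$ implies the necessary condition of Theorem~\ref{thm:necessary} by Remark~\ref{rem:eigenvalue_form}. However, $\rho(\mathbf{W})<1$ alone does not imply $L<1$ for non-normal $\mathbf{W}$, and the block matrix with trained $\mathbf{W^{inter}}$ is generally non-normal. The proof should not claim that the two conditions coincide.
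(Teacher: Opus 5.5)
Your proposal is correct and follows the paper's proof essentially step for step: the triangle inequality, the componentwise $1$-Lipschitz property of $\tanh$ with the input term cancelling, the operator-norm bound $\|\mathbf{W}(\mathbf{x}-\mathbf{x}')\|_2\le\|\mathbf{W}\|_2\|\mathbf{x}-\mathbf{x}'\|_2$, and recursion to $L^N\|\mathbf{x}-\mathbf{x}'\|\le L^N D$. Your caveat is also accurate: since $\sigma_{\max}(\mathbf{W})\ge\rho(\mathbf{W})$, the condition $L<1$ is strictly stronger than the spectral-radius condition for non-normal $\mathbf{W}$, which corrects the paper's framing that the ``same'' condition is sufficient.
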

 
\begin{proof}
We first show that $F$ is Lipschitz continuous in $\mathbf{x}$
with constant $L < 1$. For any $\mathbf{u}$ and any
$\mathbf{x}, \mathbf{x}' \in \mathbb{R}^{N}$:
\begin{align}
\bigl\|F(\mathbf{u}, \mathbf{x})
       - F(\mathbf{u}, \mathbf{x}')\bigr\|
&= \bigl\|(1-\alpha)(\mathbf{x} - \mathbf{x}') \notag \\
&\quad + \alpha\bigl[
    \tanh(\mathbf{W}{x}
          + \mathbf{W}_{\mathrm{in}}\mathbf{u}) \notag \\
&\qquad\;
    - \tanh(\mathbf{W}\mathbf{x}'
            + \mathbf{W}_{\mathrm{in}}\mathbf{u})
\bigr]\bigr\|
\label{eq:proof_step1}\\
&\le (1-\alpha)\|\mathbf{x} - \mathbf{x}'\| \notag \\
&\quad + \alpha\bigl\|
    \tanh(\mathbf{W}\mathbf{x}
          + \mathbf{W}_{\mathrm{in}}\mathbf{u}) \notag \\
&\qquad\;
    - \tanh(\mathbf{W}\mathbf{x}'
            + \mathbf{W}_{\mathrm{in}}\mathbf{u})
\bigr\|
\label{eq:proof_step2}
\end{align}
where~\eqref{eq:proof_step2} follows from the triangle inequality.
Since $\tanh$ is Lipschitz continuous with constant~$1$, i.e.,
$|\tanh(a) - \tanh(b)| \leq |a - b|$ for all $a,b \in \mathbb{R}$
(applied element-wise), the second term satisfies
\begin{equation}
\begin{aligned}
    \bigl\|\tanh(\mathbf{W}\mathbf{x} + \cdots)
           - \tanh(\mathbf{W}\mathbf{x}' + \cdots)\bigr\| 
    &\leq \|\mathbf{W}(\mathbf{x} - \mathbf{x}')\| \\
    &\leq \|\mathbf{W}\|_2\,\|\mathbf{x} - \mathbf{x}'\|,
\end{aligned}
    \label{eq:proof_step3}
\end{equation}
where the last inequality uses the sub-multiplicativity of matrix
norms. Substituting~\eqref{eq:proof_step3}
into~\eqref{eq:proof_step2} gives
\begin{equation}
\begin{aligned}
\|F(\mathbf{u},\mathbf{x})-F(\mathbf{u},\mathbf{x}')\|
&\leq
(1-\alpha)\|\mathbf{x}-\mathbf{x}'\|
+
\alpha \|\mathbf{W}\|_2 \|\mathbf{x}-\mathbf{x}'\| \\
&=
\big[(1-\alpha)+\alpha\|\mathbf{W}\|_2\big]\|\mathbf{x}-\mathbf{x}'\| \\
&=
L\|\mathbf{x}-\mathbf{x}'\|
\end{aligned}
\label{eqn:proof4}
\end{equation}

Since $L < 1$ by assumption, $F$ is a contraction mapping on
$\mathbb{R}^{N}$ for every fixed $\mathbf{u}$. Applying this
bound recursively across all $N$ time steps of the input sequence
$\mathbf{s}_N$:
\begin{align}
    \bigl\|\hat{F}(\mathbf{s}_N, \mathbf{x})
           - \hat{F}(\mathbf{s}_N, \mathbf{x}')\bigr\|
    &\leq L^N\|\mathbf{x} - \mathbf{x}'\|
    \leq L^N D.
    \label{eq:recursive_bound}
\end{align}
Since $0 \leq L< 1$, we have $L^N \to 0$ as $N \to \infty$,
so~\eqref{eq:recursive_bound} satisfies
Definition~\ref{def:esp}, establishing the ESP.
\end{proof}

\section*{Acknowledgements}
H. S. and S. G. gratefully acknowledge the IITM FedEx SMART Center for Supply Chain Modelling, Algorithms, Research and Technology at IIT Madras for providing financial support and research resources for this work. This research was conducted as part of the project titled “Turing Patterns of Misinformation: A Novel Early-Warning System for Fake News,” funded by the IITM FedEx SMART Center at IIT Madras. M. B. acknowledges IITM-Pravartak for financial support. S. G. acknowledges financial support from the Ministry of Education, Government of India.

\bibliographystyle{ieeetr}
\bibliography{reservoir}
\end{document}